\setlist[enumerate]{nosep}
\newlength\Origarrayrulewidth
\newcommand\footnoteref[1]{\protected@xdef\@thefnmark{\ref{#1}}\@footnotemark}
\newtheorem{theorem}{Theorem}[section]
\newtheorem{lemma}{Lemma}[section]
\newcommand{\vo}{\vec{o}\@ifnextchar{^}{\,}{}}
\definecolor{asparagus}{rgb}{0.53, 0.66, 0.42}
\newcommand{\kp}{\mathcal{KP}}
\begin{document}

\settopmatter{authorsperrow=4}
\title[Can Persistent Homology provide an efficient alternative]{Can Persistent Homology provide an efficient alternative for Evaluation of Knowledge Graph Completion Methods?}
\author{Anson Bastos}
\email{cs20resch11002@iith.ac.in}
\affiliation{%
  \institution{IIT, Hyderabad}
  \country{India}
}

\author{Kuldeep Singh}
\email{kuldeep.singh1@cerence.com}
\affiliation{%
  \institution{Zerotha Research and Cerence GmbH}
   \country{Germany}
}

\author{Abhishek Nadgeri}
\email{abhishek22596@gmail.com }

\affiliation{%
  \institution{Zerotha Research and RWTH Aachen}
  \country{Germany}
}

\author{Johannes Hoffart}
\email{johannes@hoffart.ai}
\affiliation{%
   \institution{SAP}
 \country{Germany}
}

\author{Toyotaro Suzumura}
\email{suzumura@acm.org}
\affiliation{%
  \institution{The University of Tokyo}
  \country{Japan}
}

\author{Manish Singh}
\email{msingh@cse.iith.ac.in}
\affiliation{%
  \institution{IIT Hyderabad}
   \country{India}
}
\renewcommand{\shortauthors}{Bastos, et al.}





\begin{abstract}
In this paper we present a novel method, \textit{Knowledge Persistence} ($\mathcal{KP}$), for faster evaluation of Knowledge Graph (KG) completion approaches. Current ranking-based evaluation is quadratic in the size of the KG, leading to long evaluation times and consequently a high carbon footprint. 
$\mathcal{KP}$ addresses this by representing the topology of the KG completion methods through the lens of topological data analysis, concretely using persistent homology. The characteristics of persistent homology allow $\mathcal{KP}$ to evaluate the quality of the KG completion looking only at a fraction of the data. 
Experimental results on standard datasets show that the proposed metric is highly correlated with ranking metrics (Hits@N, MR, MRR). Performance evaluation shows that $\mathcal{KP}$ is computationally efficient: In some cases, the evaluation time (validation+test) of a KG completion method has been reduced from 18 hours (using Hits@10) to 27 seconds (using $\mathcal{KP}$), and on average (across methods \& data) reduces the evaluation time (validation+test) by $\approx$ \textbf{99.96}\%.
\end{abstract}

\maketitle

\section{Introduction} \label{sec:introduction}
Publicly available Knowledge Graphs (KGs) find broad applicability in several downstream tasks such as entity linking, relation extraction, fact-checking, and question answering \citep{ji2020survey,singh2018reinvent}. 
These KGs are large graph databases used to express facts in the form of relations between real-world entities and store these facts as triples (\textit{subject, relation, object}). KGs must be continuously updated because new entities might emerge or facts about entities are extended or updated. Knowledge Graph Completion (KGC) task aims to fill the missing piece of information into an incomplete triple of KG \cite{ji2020survey,gesese2019survey,bastos2021hopfe}. 

Several Knowledge Graph Embedding (KGE) approaches have been proposed to model entities and relations in vector space for missing link prediction in a KG \cite{wang2017knowledge}. KGE methods infer the connectivity patterns (symmetry, asymmetry, etc.) in the KGs by defining a scoring function to calculate the plausibility of a knowledge graph triple. 
While calculating plausibility of a KG triple $\uptau = (e_h,r,e_t)$, the predicted score by scoring function affirms the confidence of a model that entities $e_t$ and $e_h$ are linked by $r$. 

For evaluating KGE methods, ranking metrics have been widely used \cite{ji2020survey} which is based on the following criteria: given a KG triple with a missing head or tail entity, what is the ability of the KGE method to rank candidate entities averaged over triples in a held-out test set \citep{mohamed2020popularity}? 
These ranking metrics are useful as they intend to gauge the behavior of the methods in real world applications of KG completion. Since 2019, over 100 KGE articles have been published in various leading conferences and journals that use ranking metrics as evaluation protocol\footnote{\url{https://github.com/xinguoxia/KGE\#papers}}. 

\textbf{Limitations of Ranking-based Evaluation:}
The key challenge while computing ranking metrics for model evaluation is the time taken to obtain them. Since the (most of) KGE models aim to rank all the negative triples that are not present in the KG \cite{bordes2011learning,bordes2013translating}, computing these metrics takes a quadratic time in the number of entities in the KG. Moreover, the problem gets alleviated in the case of hyper-relations \cite{zhang2021meta} where more than two entities participate, leading to exponential computation time. For instance, \citet{ali2021bringing} spent 24,804 GPU hours of computation time while performing a large-scale benchmarking of KGE methods.

There are two issues with high model evaluation time. \textbf{Firstly}, \underline{efficiency at evaluation time} is not a widely-adapted criterion for assessing KGE models alongside accuracy and related measures. There are efforts to make KGE methods efficient at training time \cite{wang2021lightweight,wang2022swift}. However, these methods also use ranking-based protocols resulting in high evaluation time. \textbf{Secondly}, the need for significant computational resources for the KG completion task excludes a large group of researchers in universities/labs with restricted GPU availability. Such preliminary exclusion implicitly challenges the basic notion of various diversity and inclusion initiatives for making the Web and its related research accessible to a wider community. In past, researchers have worked extensively towards efficient Web-related technologies such as Web Crawling \cite{broder2003efficient}, Web Indexing \cite{lim2003dynamic}, RDF processing \cite{galarraga2014partout}, etc.
Hence, for the KG completion task, similar to other efficient Web-based research, there is a necessity to develop alternative evaluation protocols to reduce the computation complexity, a crucial research gap in available KGE scientific literature. 
Another critical issue in ranking metrics is that they are biased towards popular entities and such popularity bias is not captured by current evaluation metrics \cite{mohamed2020popularity}. Hence, we need a metric which is efficient than popular ranking metrics and also omits such biases. 

\textbf{Motivation and Contribution:}
In this work, we focus on addressing above-mentioned key research gaps and aim for the first study to make KGE evaluation more efficient.
We introduce \textit{Knowledge Persistence}($\mathcal{KP}$), a method for characterizing the topology of the learnt KG representations. It builds upon Topological Data Analysis \cite{wasserman2018topological} based on the concepts from Persistent Homology(PH) \cite{edelsbrunner2000topological}, which has been proven beneficial for analyzing deep networks \cite{rieck2018neural,moor2020topological}. 
PH is able to effectively capture the 
geometry of the manifold on which the representations reside whilst requiring fraction of data \cite{edelsbrunner2000topological}. This property allows to reduce the quadratic complexity of considering all the data points 
(KG triples in our case) for ranking. Another crucial fact that makes PH useful is its stability with respect to perturbations making $\mathcal{KP}$ robust to noise \cite{hensel2021survey} mitigating the issues due to the open-world problem.
Thus we use PH due to its effectiveness for limited resources and noise \cite{on_effectiveness_of_PH}. 
Concretely, the following are our key contributions:
\begin{enumerate}
    \item We propose ($\mathcal{KP}$), a novel approach along with its theoretical foundations to estimate the performance of KGE models through the lens of topological data analysis. This allows us to drastically reduce the computation factor from order of  $\mathcal{O}(|\mathcal{E}|^2)$ to $\mathcal{O}(|\mathcal{E}|)$. The code is \href{https://github.com/ansonb/Knowledge_Persistence}{\textbf{here}}.
    \item We run extensive experiments on families of KGE methods (e.g., Translation, Rotation, Bi-Linear, Factorization, Neural Network methods) using standard benchmark datasets. The experiments show that $\mathcal{KP}$ correlates well with the standard ranking metrics. Hence, $\mathcal{KP}$ could be used for faster prototyping of KGE methods and paves the way for efficient evaluation methods in this domain. 
\end{enumerate}

In the remainder of the paper, related work is in section \ref{sec:related}. Section \ref{sec:preliminaries} briefly explains the concept of persistent homology. Section \ref{sec:problem_approach} describes the proposed method. Later, section \ref{sec:experiment} shows associated empirical results and we conclude in section \ref{sec:conclusion}.

\section{Related Work} \label{sec:related}
Broadly, KG embeddings are classified into translation and semantic matching models \cite{wang2017knowledge}. Translation methods such as TransE \cite{bordes2013translating}, TransH \cite{wang2014knowledge}, TransR \cite{lin2015learning} use distance-based scoring functions. Whereas semantic matching models (e.g., ComplEx \cite{trouillon2016complex}, Distmult \cite{DBLP:journals/corr/YangYHGD14a}, RotatE \cite{sun2018rotate}) use similarity-based scoring functions. 

\citet{kadlec2017knowledge} first pointed limitations of KGE evaluation and its dependency on hyperparameter tuning. \citep{sun2020re} with exhaustive evaluation (using ranking metrics) showed issues of scoring functions of KGE methods whereas \cite{nayyeri2019toward} studied the effect of loss function of KGE performance. \citet{jain2021embeddings} studied if KGE methods capture KG semantic properties. Work in \cite{pezeshkpour2020revisiting} provides a new dataset that allows the study of calibration results for KGE models. \citet{speranskaya2020ranking} used precision and recall rather
than rankings to measure the quality of completion models. Authors proposed a new dataset containing triples
such that their completion is both possible and impossible based
on queries. However, queries were build by creating a tight dependency on such queries for the evaluation as pointed by \cite{tiwari2021revisiting}. \citet{rim2021behavioral} proposed a capability-based evaluation where the focus is to evaluate KGE methods on various dimensions such as relation symmetry, entity hierarchy, entity disambiguation, etc. \citet{mohamed2020popularity} fixed the popularity bias of ranking metrics by introducing modified ranking metrics. The geometric perspective of KGE methods was introduced by \cite{sharma2018towards}
and its correlation with task performance. \citet{berrendorf2020interpretable} suggested the adjusted mean rank to improve reciprocal rank, which is an ordinal scale. Authors do not consider the effect of negative triples available for a
given triple under evaluation. \cite{tiwari2021revisiting} propose to balances the number of negatives per triple to improve ranking metrics. Authors suggested the preparation of training/testing splits by maintaining the topology. Work in \cite{li2021efficient} proposes efficient non-sampling techniques for KG embedding training, few other initiatives improve efficiency of KGE training time \cite{wang2021lightweight,wang2022swift,wang2021mulde}, and hyperparameter search efficiency of embedding models \cite{DBLP:journals/corr/abs-2205-02460,wang2021explainable,tu2019autone}.

Overall, the literature is rich with evaluations of knowledge graph completion methods \cite{jain2020knowledge,bansal2020impact,tabacof2019probability,safavi2020codex}. However, to the best of our knowledge, extensive attempts have not been made to improve KG evaluation protocols' efficiency, i.e., to reduce run-time of widely-used ranking metrics for faster prototyping. We position our work orthogonal to existing attempts such as \cite{sharma2018towards}, \cite{tiwari2021revisiting}, \cite{mohamed2020popularity}, and \cite{rim2021behavioral}. In contrast with these attempts, our approach provides a topological perspective of the learned KG embeddings and focuses on improving the efficiency of KGE evaluations. 
\begin{figure*}[ht!]
    \centering
    \includegraphics[width=0.85\textwidth]{images/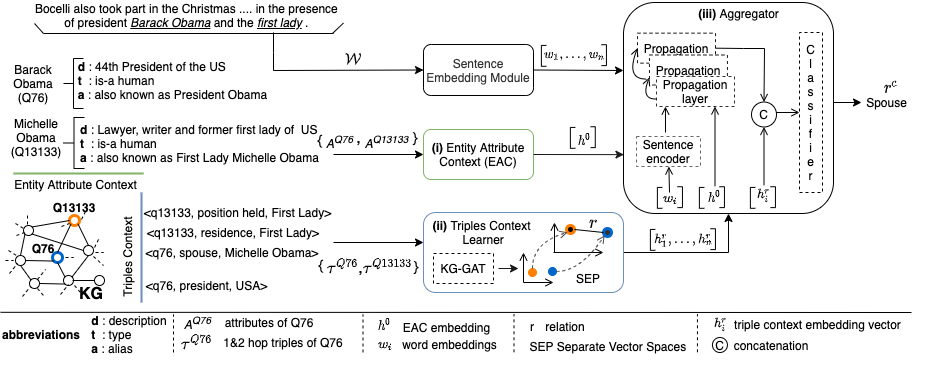}
    \caption{Calculating Knowledge Persistence($\mathcal{KP}$) score from the given KG and KG embedding method. The KG is sampled for positive($\mathcal{G}^{+}$) and negative($\mathcal{G}^{-}$) triples (step one), keeping the order $\mathcal{O}(|\mathcal{E}|)$. The edge weights represent the score obtained from the KG embedding method. In step two, the persistence diagram (PD) is computed using filtration process explained in Figure \ref{fig:pd}. In final step, a Sliced Wasserstein distance (SW) is obtained between the PDs of $\mathcal{G}^{+}$ and $\mathcal{G}^{-}$ to get the $\mathcal{KP}$ score. However, ranking metrics run the KGE methods over all the $\mathcal{O}(|\mathcal{E}|^2)$ triples as explained in bottom left part of the figure(red box).}
    \label{fig:kp_method}
\end{figure*}

\section{Preliminaries} \label{sec:preliminaries}
We now briefly describe concepts used in this paper. \\
\textbf{Ranking metrics} have been used for evaluating KG embedding methods since the inception of the KG completion task  \cite{bordes2013translating}. These metrics include the Mean Rank (MR), Mean Reciprocal Rank (MRR) and the cut-off hit ratio (Hits@N (N=1,3,10)). MR reports the average predicted rank of all the labeled triples. MRR is the average of the inverse rank of the labelled triples. Hits@N evaluates the fraction of the labeled triples that are present in the top N predicted results. 

\textbf{Persistent Homology (PH) \cite{edelsbrunner2000topological,hensel2021survey}:} 
studies the topological features such as components in 0-dimension (e.g., a node), holes in 1-dimension (e.g., a void area bounded by triangle edges) and so on, spread over a scale. 
Thus, one need not choose a scale beforehand. 
The number(rank) of these topological features(homology group) in every dimension at a particular scale can be used for downstream applications. 
Consider the simplicial complex  ( e.g., point is a 0-simplex, an edge is a 1-simplex, a triangle
is a 2-simplex ) $C$ with weights $a_0 \leq a_1 \leq a_2 \dots a_{m-1}$, which could represent the edge weights, for example, the triple score from the KG embedding method in our case. One can then define a Filtration process  \cite{edelsbrunner2000topological}, which refers to generating a nested sequence of complexes $\phi \subseteq C_1 \subseteq C_2 \subseteq \dots C_m = C$ in time/scale as the simplices below the threshold weights are added in the complex. 
The filtration process  \cite{edelsbrunner2000topological} results in the creation(birth) and destruction(death) of components, holes, etc. Thus each structure is associated with a birth-death pair $(a_i,a_j) \in R^2$ with $i \leq j$. The persistence or lifetime of each component can then be given by $a_j-a_i$. A \textit{persistence diagram (PD)} summarizes the (birth,death) pair of each object on a 2D plot, with birth times on the x axis and death times on the y axis. The points near the diagonal are shortlived components and generally are considered noise (local topology), whereas the persistent objects (global topology) are treated as features. We consider local and global topology to compare two PDs (i.e., positive and negative triple graphs in our case).

\section{Problem Statement and Method} \label{sec:problem_approach}
\subsection{Problem Setup} \label{ref:problem}
We define a KG as a tuple $KG = (\mathcal{E},\mathcal{R},\mathcal{T}^+)$ where $\mathcal{E}$ denotes the set of entities (vertices), $\mathcal{R}$ is the set of relations (edges), and $\mathcal{T}^+ \subseteq \mathcal{E} \times \mathcal{R} \times \mathcal{E} $ is a set of all triples.
A triple $\uptau = (e_h,r,e_t) \in \mathcal{T}^+$ indicates that, for the relation $r \in \mathcal{R}$, $e_h$ is the head entity (origin of the relation) while $e_t$ is the tail entity. Since $KG$ is a multigraph; $e_h=e_t$ may hold and $|\{r_{e_h,e_t}\}|\geq 0$ for any two entities. The \textit{KG completion task} predicts the entity pairs $\langle e_i,e_j\rangle$ in the KG that have a relation $r^c \in \mathcal{R}$ between them.

\subsection{Proposed Method} \label{sec:method}

\begin{figure*}[ht!]
    \centering
    \includegraphics[width=0.75\textwidth]{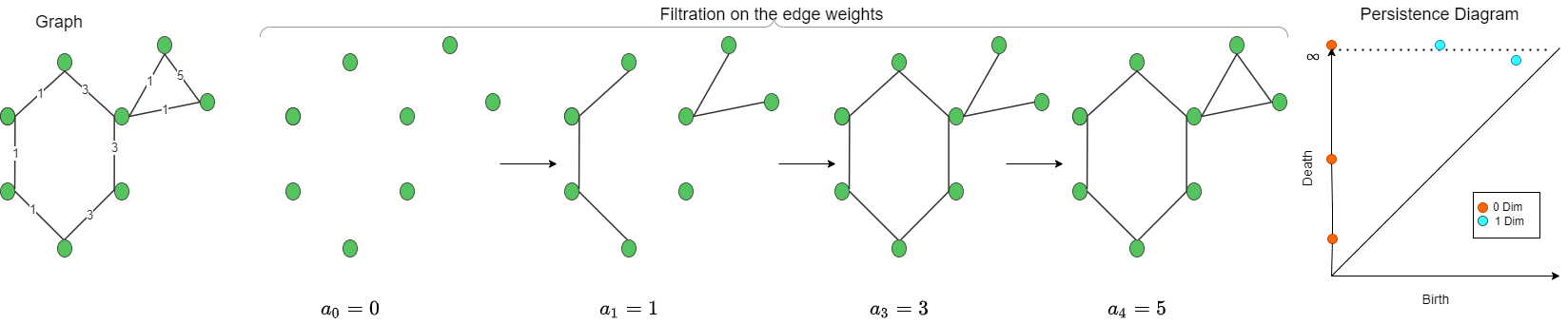}
    \caption{For a KGE method, the positive triple graph $\mathcal{G}^{+}$ is used as input (leftmost graph with edge weights) and filtration process is applied on the edge weights (calculated by KGE method) for the graph. The filtration starts with only nodes as first step, and based on the edge weights, edges are added to the nodes. The persistence diagram is given on the right with red dots indicating 0-dimensional homology (components) and the blue dots indicating 1-dimensional homology (cycles). Persistent Diagram generated from this filtration process is a condensed 2D representation of $\mathcal{G}^{+}$. A similar process is repeated for $\mathcal{G}^{-}$.}
    \label{fig:pd}
\end{figure*}

In this section we describe our approach for evaluating KG embedding methods using the theory of persistent homology (PH) . This process is divided into three steps ( Figure \ref{fig:kp_method}), namely: (i) Graph construction, (ii) Filtration process and (iii) Sliced Wasserstein distance computation. The first step creates two graphs (one for positive triples, another for negative triples) using sampling($\mathcal{O(\mathcal{V})}$ triples), with scores calculated by a KGE method as edge weights. The second step considers these graphs and, using a process called "filtration," converts to an equivalent lower dimension representation. The last step calculates the distance between graphs to provide a final metric score. We now detail the approach. 
\subsubsection{Graph Construction}
 We envisioned KGE from the topological lens while proposing an efficient solution for its evaluation. Previous works such as \cite{sharma2018towards} proposed a KGE metric only considering embedding space. However, 
we intend to preserve the topology (graph structure and its topological feature) along with the KG embedding features. We first construct graphs of positive and negative triples. We denote a graph as $(\mathcal{V},\mathcal{E})$ where $\mathcal{V}$ is the set of $N$ nodes and $\mathcal{E}$ represents the edges between them.
Consider a KG embedding method $\mathcal{M}$ that takes as input the triple $\uptau = (h, r, t) \in \mathcal{T}$ and gives the score $s_{\uptau}$ of it being a right triple. We construct a weighted directed graph $\mathcal{G}^{+}$ from positive triples $\uptau \in \mathcal{T}^{+}$ in the train set, with the entities as the nodes and the relations between them as the edges having $s_{\uptau}$ as the edge weights. Here, $s_{\uptau}$ is the score calculated by KGE method for a triple and we propose to use it as the edge weights. Our idea is to capture topology of graph ($\mathcal{G}^{+}$) with representation learned by a KG embedding method. 
We sample an order of $\mathcal{O}(|\mathcal{E}|)$ triples, $|\mathcal{E}|$ being the number of entities to keep computational time linear. Similarly, we construct a negative graph $\mathcal{G}^{-}$ by sampling the same number of unknown triples as the positive samples. One question may arise if $\mathcal{KP}$ is robust to sampling, that we answer theoretically in Theorem \ref{thm_stability_kp_ranking_main} and empirically in section \ref{sec:evaluation}.
Note, here we do not take all the negative triples in the graphs and 
consider only a fraction of what the ranking metrics need. This is a fundamental difference with ranking metrics. Ranking metrics use all the unlabeled triples as negatives for ranking, thus incurring a computational cost of $O(|\mathcal{E}|^2)$. 
\subsubsection{Filtration Process}
Having constructed the Graphs $\mathcal{G}^{+}$ and $\mathcal{G}^{-}$, we now need some definition of a distance between them to define a metric. However, since the KGs could be large with many entities and relations, directly comparing the graphs could be computationally challenging. Therefore, we allude to the theory of persistent homology (PH) to summarize the structures in the graphs in the form of the persistence diagram (PD). Such summarizing is obtained by a process known as filtration \cite{zomorodian2005computing}. One can imagine a PD as mapping of higher dimensional data
to a 2D plane upholding the representation of data points and we can then derive \textit{computational efficiency} for distance comparison between 2D representations.
Specifically, we compute the 0-dimensional topological features (i.e., connected-nodes/components) for each graph ($\mathcal{G}^{-}$ and $\mathcal{G}^{+}$) to keep the computation time linear. We also experimented using the 1-dimensional features without much empirical benefit. 

Consider the positive triple graph $\mathcal{G}^{+}$ as input (cf., Figure \ref{fig:pd}). We would need a scale (as pointed in section \ref{sec:preliminaries}) for the filtration process. Once the filtration process starts, initially, we have a graph structure containing only the nodes (entities) and no edges of $\mathcal{G}^{+}$. For capturing topological features at various scales, we define a variable $a$ which varies from $-\infty$ to $+\infty$ and it is then compared with edge weights ($s_{\uptau}$). A scale allows to capture topology at various timesteps. Thus, we use the edge weights obtained from the scores ($s_{\uptau}$) of the KGE methods for filtration. As the filtration proceeds, the graph structures (components) are generated/removed. At a given scale $a$, the graph structure ($(\mathcal{G}^{+}_{sub})_a$) contains those edges (triples) for which  $s_{\uptau} \leq a$. Formally, this is expressed as: 
\[ (\mathcal{G}^{+}_{sub})_a = \{(\mathcal{V},\mathcal{E}^{+}_a) | \mathcal{E}^{+}_a \subseteq \mathcal{E},  s_{\uptau} \leq a \ \forall \uptau \in \mathcal{E}^{+}_a\} \]
Alternatively, we  add those edges for which score of the triple is greater than or equal to the filtration value, i.e., $s_{\uptau} \geq a$ defined as 
\[ (\mathcal{G}^{+}_{super})_a = \{(\mathcal{V},\mathcal{E}^{{a}^+}) | \mathcal{E}^{{a}^+} \subseteq \mathcal{E},  s_{\uptau} \geq a \ \forall \uptau \in \mathcal{E}^{{a}^+}\} \]
One can imagine that for filtration, graph $\mathcal{G}^{+}$ is subdivided into $(\mathcal{G}^{+}_{sub})_a$ and  $(\mathcal{G}^{+}_{super})_a$ as the filtration adds/deletes edges for capturing topological features. Hence, specific components in a sub-graphs will appear and certain components will disappear at different scale levels (timesteps) $a=1,3,5$ and so on. Please note, Figure \ref{fig:pd} explains creation of PD for  $(\mathcal{G}^{+}_{sub})_a$. A similar process is repeated for $(\mathcal{G}^{+}_{super})_a$. This expansion/contraction process enables capturing topology at different time-steps without worrying about defining an optimal scale (similar to hyperparameter). 
Next step is the creation of persistent diagrams of $(\mathcal{G}^{+}_{sub})_a$ and  $(\mathcal{G}^{+}_{super})_a$ where  the x-axis and y-axis denotes the timesteps of appearance/disappearance of components. 
For creating a 2D representation graph, components of graphs which appear(disappear) during filtration process at $a_x(a_y)$ are plotted on $(a_x,a_y)$. The persistence or lifetime of each component can then be given by $a_y-a_x$. 
At implementation level, one can view  PDs($\in R^{N \times 2}$) of $(\mathcal{G}^{+}_{sub})_a$ and  $(\mathcal{G}^{+}_{super})_a$ as tensors which are concatenated into one common tensor representing positive triple graph $\mathcal{G}^{+}$.
Hence, final PD of $\mathcal{G}^{+}$ is a concatenation of PDs of $(\mathcal{G}^{+}_{sub})_a$ and $(\mathcal{G}^{+}_{super})_a$. 
This final persistent diagram represents a summary of the local and global topological features of the graph $\mathcal{G}^{+}$. Following are the benefits of a persistent diagram against considering the whole graph: 1) a 2D summary of a higher dimensional graph structure data is highly beneficial for large graphs in terms of the computational efficiency. 2) The summary could contain fewer data points than the original graph, preserving the topological information. Similarly, the process is repeated for negative triple graph $\mathcal{G}^{-}$ for creating its persistence diagram. Now, the two newly created PDs are used for calculating the proposed metric score. 
\subsubsection{Sliced Wasserstein distance computation}
To compare two PDs, generally the Wasserstein distance between them is computed \cite{fasy2020comparing}. 
As the Wasserstein distance could be computationally costly, we find the sliced Wasserstein distance \cite{pmlr-v70-carriere17a} between the PDs, which we empirically observe to be eight times faster on average. 
The Sliced Wasserstein distance($SW$) between measures $\mu$ and $\nu$ is:
\[ SW_p(\mu, \nu) = \left( \int_{S^{d-1}} W^{p}_{p}(R_{\mu}(.,\theta), R_{\nu}(.,\theta)) \right)^{\frac{1}{p}} \]
where $R_{\mu}(.,\theta)$ is the projection of $\mu$ along $\theta$, $W$ is initial Wasserstein distance. Generally a Monte Carlo average over $L$ samples is done instead of the integral. The $SW$ distance takes $\mathcal{O}(LNd + LNlog(N))$ time which can be improved to linear time $\mathcal{O}(Nd)$ for $SW_2$ (i.e., Euclidean distance) as a closed form solution \cite{nadjahi2021fast}.
Thus, 
\begin{equation}
    \mathcal{KP}(\mathcal{G}^{+},\mathcal{G}^{-}) = SW(D^{+},D^{-})
\end{equation}
where $D^{+},D^{-}$ are the persistence diagrams for $\mathcal{G}^{+},\mathcal{G}^{-}$ respectively. Since the metric is obtained by summarizing the \textit{Knowledge} graph using \textit{Persistence} diagrams we term it as \textit{Knowledge Persistence}($\mathcal{KP}$). As $\kp$ correlates well with ranking metrics (sections \ref{theory_main} and \ref{sec:experiment}), higher $\kp$ signifies a better performance of the KGE method.

\subsubsection{Theoretical justification}\label{theory_main}
This section briefly states the theoretical results justifying the proposed method to approximate the ranking metrics. We begin the analysis by assuming two distributions: One for the positive graph's edge weights(scores) and the other for the negative graph. We define a metric "PERM" (Figure \ref{fig:perm_intuition}), that is a proxy to the ranking metrics while being continuous(for the definition of integrals and derivatives) for ease of theoretical analysis. The proof sketches are given in the appendix.

\begin{figure}[h]
    \centering
    \includegraphics[width=0.40\textwidth]{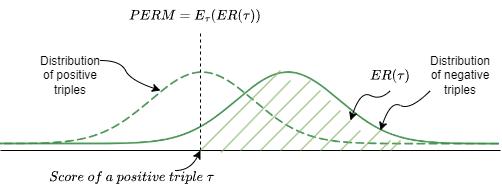}
    \caption{Figure gives an intuition of the metric PERM which is designed to be a proxy to the ranking metrics for ease of theoretical analysis. For a given positive triple $\tau$ with score $x_{\tau}$ the expected rank($ER(\tau)$) is defined as the area under the curve of the negative distribution from $x_{\tau}$ to $\infty$(shown in the shaded area above). PERM is then defined as the expectation of the expected rank under the positive distribution.}
    \label{fig:perm_intuition}
\end{figure}

\begin{definition}[Expected Ranking(ER)]
    Consider the positive triples to have the distribution $D^{+}$ and the negative triples to have the distribution $D^{-}$. For a positive triple with score $a$ its expected ranking(ER) is defined as,
    $ER(a) = \int_{x=a}^{x=\infty} D^{-}(x) dx$
\end{definition}

\begin{definition}[PERM]
    Consider the positive triples to have the distribution $D^{+}$ and the negative triples to have the distribution $D^{-}$. The PERM metric is then defined as,
    $PERM = \int_{x=-\infty}^{x=\infty} D^{+}(x)ER(x) dx$
\end{definition}
It is easy to see that PERM has a monotone increasing correspondence with the actual ranking metrics. That is, as many of the negative triples get a higher score than the positive triples, the distribution of the negative triples will shift further right of the positive distribution. Hence, the area under the curve would increase for a given triple(x=a).
We just established a monotone increasing correspondence of PERM with the ranking metrics, we now need show that there exists a one-one correspondence between PERM and $\mathcal{KP}$. 
For closed-form solutions, we work with normalised distributions (can be extended to other distributions using \cite{sakia1992box}) of KGE score under the following mild consideration:
As the KGE method converges, the mean statistic($m_{\nu}$) of the scores of the positive triples consistently lies on one side of the half-plane formed by the mean statistic($m_{\mu}$) of the negative triples, irrespective of the data distribution.
\begin{lemma}\label{lm_corr_kp_perm_main}
    $\mathcal{KP}$ has a monotone increasing correspondence with the Proxy of the Expected Ranking Metrics(PERM) under the above stated considerations as $m_{\nu}$ deviates from $m_{\mu}$
\end{lemma}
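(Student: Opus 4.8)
The plan is to exhibit a single scalar ``separation'' parameter, namely the deviation $\delta \coloneqq m_{\nu} - m_{\mu}$ of the positive mean score from the negative mean score, and to prove that \emph{both} PERM and $\mathcal{KP}$ are strictly monotone functions of $\delta$. Once this is done, eliminating $\delta$ expresses $\mathcal{KP}$ as a strictly monotone function of PERM, and tracking the signs of the two derivatives shows this composite is increasing, which is precisely the claimed correspondence.

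First I would pin down the dependence of PERM on $\delta$. Working with the normalized (Gaussian) scores guaranteed by the stated consideration, I would write $D^{-}$ with mean $m_{\mu}$ and evaluate the inner tail integral $ER(a)=\int_{a}^{\infty}D^{-}(x)\,dx$ in closed form as a complementary Gaussian tail $\tfrac{1}{2}\operatorname{erfc}\!\big((a-m_{\mu})/(\sigma_{\mu}\sqrt{2})\big)$. Substituting into $\mathrm{PERM}=\int_{-\infty}^{\infty}D^{+}(x)\,ER(x)\,dx$ and integrating the product of a Gaussian against a Gaussian tail gives a closed form whose only $\delta$-dependence enters through $\Phi$ evaluated at a strictly monotone argument of $\delta$. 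Differentiating with respect to $\delta$ then yields a sign-definite derivative, establishing that PERM is strictly monotone in $\delta$ as $m_{\nu}$ deviates from $m_{\mu}$.

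The substantive step is the dependence of $\mathcal{KP}=SW(D^{+},D^{-})$ on $\delta$. The bridge is that the $0$-dimensional persistence diagram obtained from the sub/super-level filtration of the score-weighted graph is controlled by the edge-weight (i.e.\ score) distribution: the birth/death coordinates of the points of the positive diagram are generated by scores drawn from $D^{+}$ and hence localize around $m_{\nu}$, while those of the negative diagram cluster around $m_{\mu}$. Holding the intrinsic spread of the two diagrams fixed, a rigid translation of one diagram centroid relative to the other by $\delta$ lets me invoke the closed-form $SW_{2}$ expression; the distance then grows like a strictly increasing function of the centroid displacement $|\delta|$, giving $\partial \mathcal{KP}/\partial\delta$ of fixed sign. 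Composing the two monotone relations proves the lemma.

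I expect the main obstacle to be making the third step rigorous: the persistence diagram is a function of the graph \emph{topology} (which components merge, and when) and not only of the marginal score distribution, so the clean statement ``the diagram centroid tracks the mean score'' requires either the stability theorem for persistence diagrams (to bound how much resampling or topology perturbs the diagram) together with a concentration argument that the diagram mass localizes near the mean, or an explicit model of the filtration under the normalized-score assumption. Controlling the covariance term in the closed-form $SW_{2}$ --- so that the centroid-displacement contribution dominates and the sign of $\partial\mathcal{KP}/\partial\delta$ remains fixed across the whole range of $\delta$ --- is the delicate part.
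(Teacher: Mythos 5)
Your proposal follows essentially the same route as the paper's proof: both model the positive/negative scores as Gaussians, identify the persistence diagrams with those score distributions, invoke the closed-form Gaussian Wasserstein distance, and establish sign-definite derivatives of PERM and $\mathcal{KP}$ with respect to the mean separation before composing the two monotone relations. The only notable differences are that the paper also tracks derivatives with respect to the covariances $\Sigma_{\mu},\Sigma_{\nu}$ via a directional-derivative argument rather than holding the spreads fixed under a rigid translation, and that the obstacle you flag --- the diagram being a function of graph topology rather than of the marginal score law alone --- is not resolved in the paper either: it is assumed away by directly taking $\mu,\nu$ to be the score distributions.
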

The above lemma shows that there is a one-one correspondence between $\kp$ and PERM and by definition PERM has a one-one correspondence with the ranking metrics. Therefore, the next theorem follows as a natural consequence:

\begin{theorem}\label{thm_corr_kp_ranking_main}
    $\mathcal{KP}$ has a one-one correspondence with the ranking metrics under the above stated considerations. 
\end{theorem}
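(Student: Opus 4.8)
The plan is to prove the theorem by composing two monotone correspondences: the one between $\mathcal{KP}$ and PERM supplied by Lemma~\ref{lm_corr_kp_perm_main}, and the one between PERM and the ranking metrics that was argued informally just before the lemma. Since the composition of two strictly monotone bijections is again a strictly monotone bijection, chaining these two links immediately yields a one-one correspondence between $\mathcal{KP}$ and the ranking metrics. The work therefore reduces to two tasks: (a) pinning down the PERM-to-ranking link rigorously, and (b) verifying that both links are \emph{strictly} monotone, so that the composed correspondence is genuinely one-one rather than merely order-preserving.

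First I would make the PERM-to-ranking correspondence precise. Recall that $ER(a)=\int_{x=a}^{x=\infty} D^{-}(x)\,dx$ is exactly the probability that a negative triple outscores a positive triple of score $a$; multiplying by the number of sampled negatives gives the expected rank of that positive triple in the continuous limit, so that $PERM = \int_{x=-\infty}^{x=\infty} D^{+}(x)ER(x)\,dx$ is, up to a fixed normalising constant, the expected value of the Mean Rank. This renders the PERM--MR correspondence affine, hence a strictly monotone bijection. For MRR and Hits@N I would instead argue that each is a strictly monotone, though no longer linear, function of the same overlap integral: as $D^{-}$ shifts further from $D^{+}$, every positive triple's expected rank decreases strictly, so each ranking statistic moves in one fixed direction without ties. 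This is the point at which I would invoke the stated mean-statistic consideration, namely that $m_{\nu}$ remains on one side of the half-plane of $m_{\mu}$, since it guarantees the shift is one-directional and prevents the correspondence from folding back on itself.

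Next I would invoke Lemma~\ref{lm_corr_kp_perm_main}, which already provides a monotone increasing correspondence between $\mathcal{KP}$ and PERM as $m_{\nu}$ deviates from $m_{\mu}$, and whose surrounding discussion observes that this correspondence is in fact one-one. Composing it with the PERM-to-ranking bijection established above produces the claimed bijection. Because composition preserves both strict monotonicity and invertibility, $\mathcal{KP}$ determines, and is determined by, each ranking metric under the stated considerations, which is precisely the assertion of Theorem~\ref{thm_corr_kp_ranking_main}.

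The main obstacle I anticipate is establishing strictness, i.e.\ upgrading \emph{monotone increasing} to \emph{one-one}. Lemma~\ref{lm_corr_kp_perm_main} is phrased only as monotone increasing, and the ranking metrics are intrinsically discrete and ordinal while both PERM and $\mathcal{KP}$ are continuous; the cleanest resolution is to interpret the correspondence as holding between $\mathcal{KP}$ and the continuous expectations of the ranking metrics, and then to show that the derivative of each such metric with respect to the separation $m_{\nu}-m_{\mu}$ is strictly signed throughout the regime permitted by the half-plane assumption. Verifying this non-vanishing in the boundary cases, where the two distributions barely overlap or nearly coincide, is the delicate step, because that is exactly where monotonicity could degenerate into a plateau and thereby destroy injectivity.
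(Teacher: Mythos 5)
Your proposal follows essentially the same route as the paper: the paper's own proof of Theorem~\ref{thm_corr_kp_ranking_main} is precisely the composition of the $\mathcal{KP}$--PERM correspondence from Lemma~\ref{lm_corr_kp_perm_main} with the PERM--ranking correspondence that holds by construction of PERM, which is exactly your plan. Your extra attention to strictness (upgrading ``monotone increasing'' to genuinely one-one, and worrying about plateaus in the near-coincident and barely-overlapping regimes) is a refinement the paper's one-line argument does not attempt, but it does not change the approach.
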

The above theorem states that, with high probability, there exists a correlation between $\mathcal{KP}$ and the ranking metrics under certain considerations and proof details are in the appendix. In an ideal case,  we seek a linear relationship between the proposed measure and the ranking metric. This would help interpret whether an increase/decrease in the measure would cause a corresponding increase/decrease in the ranking metric we wish to simulate. Such interpretation becomes essential when the proposed metric has different behavior from the existing metric.
While the correlation could be high, for interpretability of the results, we would also like the change in $\mathcal{KP}$ to be bounded for a change in the scores(ranking metrics). The below theorem gives a sense for this bound.
\begin{theorem}\label{thm_stability_kp_ranking_main}
    Under the considerations of theorem \ref{thm_corr_kp_ranking_main}, the relative change in $\mathcal{KP}$ on addition of random noise to the scores is bounded by a function of the original and noise-induced covariance matrix as
    $\frac{\Delta \mathcal{KP}}{\mathcal{KP}} \leq max( (1 - |\Sigma_{\mu_1}^{+1} \Sigma_{\mu_2}^{-1}|^{\frac{3}{2}}), (1 - |\Sigma_{\nu_1}^{+1} \Sigma_{\nu_2}^{-1}|^{\frac{3}{2}}) )$,
    where $\Sigma_{\mu_1}$ and $\Sigma_{\nu_1}$ are the covariance matrices of the positive and negative triples' scores respectively and $\Sigma_{\mu_2}$ and $\Sigma_{\nu_2}$ are that of the corrupted scores.
\end{theorem}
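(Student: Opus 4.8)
The plan is to exploit the closed-form description of $\mathcal{KP}$ that is already available from the proof of Theorem~\ref{thm_corr_kp_ranking_main}. Under the normalised (Gaussian) assumption on the KGE scores, the birth--death points of the persistence diagrams $D^{+}$ and $D^{-}$ are themselves jointly Gaussian, so that $\mathcal{KP}=SW(D^{+},D^{-})$ reduces to the Sliced Wasserstein distance between two Gaussian measures $\mathcal{N}(m_{\mu},\Sigma_{\mu})$ and $\mathcal{N}(m_{\nu},\Sigma_{\nu})$. First I would write this distance through its slice representation: for each direction $\theta\in S^{d-1}$ the projections are one-dimensional Gaussians, and the $1$D Wasserstein distance between them is $\big((\theta^{\top}(m_{\mu}-m_{\nu}))^{2}+(\sqrt{\theta^{\top}\Sigma_{\mu}\theta}-\sqrt{\theta^{\top}\Sigma_{\nu}\theta})^{2}\big)^{1/2}$, integrated over the sphere. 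This isolates exactly the dependence of $\mathcal{KP}$ on the covariances.

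Next I would model the corruption. Taking the added noise to be zero-mean and independent of the scores, the means $m_{\mu},m_{\nu}$ are preserved while the covariances are inflated, $\Sigma_{\mu_2}=\Sigma_{\mu_1}+\Sigma_{\text{noise}}^{\mu}$ and $\Sigma_{\nu_2}=\Sigma_{\nu_1}+\Sigma_{\text{noise}}^{\nu}$; by Minkowski's determinant inequality this guarantees $\det\Sigma_{\mu_2}\ge\det\Sigma_{\mu_1}$ and likewise for $\nu$, so the ratios $\det\Sigma_{\mu_1}/\det\Sigma_{\mu_2}$ and $\det\Sigma_{\nu_1}/\det\Sigma_{\nu_2}$ lie in $(0,1]$ and the claimed bound is non-negative. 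Because the mean term is unchanged, the entire change $\Delta\mathcal{KP}$ is carried by the $\sqrt{\theta^{\top}\Sigma\theta}$ terms. I would then bound $\mathcal{KP}_{2}/\mathcal{KP}_{1}$ from below by whitening each Gaussian: the change of variables that maps $\Sigma_{1}$ onto the inflated $\Sigma_{2}$ rescales the projected standard deviations, and a uniform eigenvalue / AM--GM bound over $\theta\in S^{d-1}$ lets me replace the direction-dependent quadratic forms $\theta^{\top}\Sigma\theta$ by the generalised variance $\det\Sigma$. The relative decrease $\Delta\mathcal{KP}/\mathcal{KP}=1-\mathcal{KP}_{2}/\mathcal{KP}_{1}$ is then controlled separately by the positive and by the negative contribution, and taking the worse of the two yields the $\max(\cdot,\cdot)$ in the statement.

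The $\tfrac{3}{2}$ exponent is where the real work lies, and I expect it to be the main obstacle. It should emerge from combining three effects: the whitening Jacobian contributes $\det\Sigma$ to the power $\tfrac12$, the two-dimensional persistence diagram forces the spherical integral over $S^{d-1}$ with $d=2$ to contribute a further factor, and the outer square root that turns $SW_{2}^{2}$ back into $SW_{2}$ adjusts the exponent once more; keeping these interactions exact (rather than merely order-of-magnitude) so that they collapse to precisely $\tfrac32$ is delicate, because the square root inside the $1$D Wasserstein term does not commute with the integration over $\theta$. I would address this by bounding $\sqrt{\theta^{\top}\Sigma_{1}\theta}-\sqrt{\theta^{\top}\Sigma_{2}\theta}$ through the concavity of the square root and only then integrating, which produces a determinant-ratio factor raised to the required power. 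Finally, assembling the positive and negative parts and invoking the considerations of Theorem~\ref{thm_corr_kp_ranking_main} (which fix the sign of $m_{\mu}-m_{\nu}$ and hence keep $\mathcal{KP}_{1}>0$, so that the ratio is well defined) completes the argument.
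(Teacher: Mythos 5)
Your proposal takes a genuinely different route from the paper --- the paper does \emph{not} use the closed-form slice representation of the Gaussian Sliced Wasserstein distance at all. Instead it bounds $\mathcal{KP}_1 - \mathcal{KP}_2$ via Kantorovich duality, writing the difference of the two optimal-transport costs as a supremum of $\int \Phi\,(d\mu_1 - d\mu_2) + \int \Psi\,(d\nu_1 - d\nu_2)$, and then controls the change of measure $d\mu_1 - d\mu_2$ using the differential identity for the Gaussian density, $\partial \mu_1/\partial x = -\mu_1 \Sigma_{\mu_1}^{-1}(x - m_{\mu_1})$, which produces the determinant factor $\det(\Sigma_{\mu_1}\Sigma_{\mu_2}^{-1})$ raised to the power $\frac{n}{2}+1$; setting $n=1$ (the scores are univariate) gives exactly $\frac{3}{2}$. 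That is the entire source of the exponent. Your plan defers precisely this step --- you acknowledge the $\frac{3}{2}$ is ``where the real work lies'' and propose to assemble it from three heuristic contributions (a whitening Jacobian worth $\det\Sigma^{1/2}$, a spherical integral over $S^{d-1}$ with $d=2$, and the outer square root of $SW_2$). This is a genuine gap, not a stylistic difference: none of the three contributions is computed, and the accounting cannot come out right, because in the paper's setting the relevant dimension is $n=1$ (the score distributions themselves), not $d=2$; your identification of the persistence-diagram points as a two-dimensional Gaussian cloud feeding a circle integral is a different object from the one the theorem's bound is actually about, so the ``collapse to precisely $\frac{3}{2}$'' you hope for has no mechanism behind it.

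Two secondary problems would also need repair even if the exponent were recovered. First, your reduction of $\theta^{\top}\Sigma\theta$ to the generalised variance $\det\Sigma$ ``by a uniform eigenvalue / AM--GM bound over $\theta$'' only gives a one-sided inequality and is not uniform in $\theta$ in dimension $\geq 2$; in dimension $1$ it is vacuous, so it cannot be the engine of the proof in either case. Second, you write $\Delta\mathcal{KP}/\mathcal{KP} = 1 - \mathcal{KP}_2/\mathcal{KP}_1$, presuming the noise can only decrease $\mathcal{KP}$. With means fixed and both covariances inflated, the slice term $\bigl(\sqrt{\theta^{\top}\Sigma_{\mu}\theta} - \sqrt{\theta^{\top}\Sigma_{\nu}\theta}\bigr)^2$ can grow (e.g.\ if only $\Sigma_{\mu}$ inflates substantially), so the sign of $\Delta\mathcal{KP}$ is not determined and your framing of the quantity to be bounded breaks down; the paper's duality argument sidesteps this because it bounds the difference $\mathcal{KP}_1 - \mathcal{KP}_2$ directly without assuming its sign is governed by covariance inflation.
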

Theorem \ref{thm_stability_kp_ranking_main} gives a bound on the change in $\mathcal{KP}$ while inducing noise in the KGE predictions. Ideally, the error/change would be 0, and as the noise is increased(and the ranking changed), gradually, the $\mathcal{KP}$ value also changes in a bounded manner as desired.

\section{Experimental Setup}\label{sec:experiment}
For de-facto KGC task (c.f., section \ref{ref:problem}), we use popular KG embedding methods from its various categories: (1) Translation: TransE \cite{bordes2013translating}, TransH \cite{wang2014knowledge}, TransR \cite{lin2015learning} (2) Bilinear, Rotation, and Factorization: RotatE \cite{sun2018rotate} TuckER \cite{balavzevic2019tucker}, and ComplEx \cite{trouillon2016complex}, (3) Neural Network based: ConvKB \cite{nguyen2018novel}. The method selection and evaluation choices are similar to \cite{mohamed2020popularity,rim2021behavioral} that propose new metrics for KG embeddings.
All methods run on a single P100 GPU machine for a maximum of 100 epochs each and evaluated every 5 epochs.
For training/testing the KG embedding methods we make use of the pykg2vec \cite{yu2021pykg2vec} library and validation runs are executed 20 times on average.
We use the standard/best hyperparameters for these datasets that the considered KGE methods reported \cite{bordes2013translating,wang2014knowledge,lin2015learning,sun2018rotate,balavzevic2019tucker,trouillon2016complex,yu2021pykg2vec}. 

\subsection{Datasets}
We use standard English KG completion datasets: WN18, WN18RR, FB15k237, FB15k, YAGO3-10 \cite{ali2021bringing,sun2018rotate}. The WN18 dataset is obtained from Wordnet \cite{miller1995wordnet} containing lexical relations between English words. WN18RR removes the inverse relations in the WN18 dataset. FB15k is obtained from the Freebase \cite{DBLP:conf/aaai/BollackerCT07} knowledge graph, and FB15k237 was created from FB15k by removing the inverse relations. The dataset details are in the Table \ref{tab:existing_datasets}. For scaling experiment, we rely on large scale YAGO3-10 dataset \cite{ali2021bringing} and due to brevity, results for Yago3-10 are in appendix ( cf., Figure \ref{fig:carbon_footprint_yago} and table \ref{tab:yago_kp_corr}). 

\subsection{Comparative Methods}
Considering ours is the first work of its kind, we select some competitive baselines as below and explain "why" we chose them. For evaluation, we report correlation \cite{chok2010pearson} between $\mathcal{KP}$ and baselines with ranking metrics (Hits@N (N= 1,3,10), MRR and MR).

\noindent \textbf{Conicity} \cite{sharma2018towards}: It finds the average cosine of the angle between an embedding and the mean embedding vector. In a sense, it gives spread of a KG embedding method in space. We would like to observe instead of topology, if calculating geometric properties of a KG embedding method be an alternative for ranking metric.

\noindent \textbf{Average Vector Length}: This metric was also proposed by \citet{sharma2018towards} to study the geometry of the KG embedding methods. It computes the average length of the embeddings.

\noindent \textbf{Graph Kernel (GK)}: we use graph kernels to compare the two graphs($\mathcal{G}^{+}, \mathcal{G}^{-}$) obtained for our approach. The rationale is to check if we could get some distance metric that correlates with the ranking metrics without persistent homology. Hence, this baseline emphasizes a direct comparison for the validity of persistent homology in our proposed method. As an implementation, we employ the widely used shortest path kernel \cite{borgwardt2005shortest} to compare how the paths(edge weights/scores) change between the two graphs. 
Since the method is computationally expensive, we sample nodes \cite{DBLP:conf/mlg/BorgwardtPVK07} and apply the kernel on the sampled graph, averaging multiple runs.
\begin{table}[ht]
\centering
\caption{(Open-Source)Benchmark Datasets for Experiments.}\label{tab:existing_datasets}
\begin{tabular}{lrrr}
\toprule
Dataset & Triples & Entities & Relations \\
\midrule
FB15K & 592,213 & 14.951 & 1,345\\
FB15K-237 & 272,115 & 14,541 & 237\\
WN18 & 151,442 & 40,943 & 18\\
WN18RR & 93,003 & 40,943 & 11\\
Yago3-10& 1,089,040 & 123,182 & 37\\
\bottomrule
\end{tabular}
\end{table}

\begin{table*}[!htb]
\begin{minipage}{1.0\linewidth}
    \medskip
\begin{adjustbox}{width=1.0\textwidth, center}
\begin{tabular}{l|ccccc|ccccc|ccccc|ccccc}
\toprule
\multicolumn{1}{c|}{\multirow{2}{*}{\textbf{Metrics}}} & \multicolumn{5}{c|}{\textbf{FB15K}} & \multicolumn{5}{c|}{\textbf{FB15K237}}  & \multicolumn{5}{c|}{\textbf{WN18}}  & \multicolumn{5}{c}{\textbf{WN18RR}}       \\
\multicolumn{1}{c|}{} & \textbf{Hits1}($\uparrow$) & \textbf{Hits3}($\uparrow$) & \textbf{Hits10}($\uparrow$) & \textbf{MR}($\downarrow$) & \textbf{MRR}($\uparrow$) & \textbf{Hits1}($\uparrow$) & \textbf{Hits3}($\uparrow$) & \textbf{Hits10}($\uparrow$) & \textbf{MR}($\downarrow$) & \textbf{MRR}($\uparrow$) & \textbf{Hits1}($\uparrow$) & \textbf{Hits3}($\uparrow$) & \textbf{Hits10}($\uparrow$) & \textbf{MR}($\downarrow$) & \textbf{MRR}($\uparrow$) & \textbf{Hits1}($\uparrow$) & \textbf{Hits3}($\uparrow$) & \textbf{Hits10}($\uparrow$) & \textbf{MR}($\downarrow$) & \textbf{MRR}($\uparrow$) \\ \midrule
          Conicity & -0.156& -0.170& -0.202& 0.085& -0.183& 0.509& 0.379& 0.356& -0.352& 0.424& -0.052& -0.096& -0.123& 0.389& -0.096& -0.267& -0.471& -0.510& 0.266& -0.448       \\
          AVL & 0.339& 0.325& 0.261& -0.423& 0.308& -0.527& -0.149& -0.158& 0.188& -0.284& 0.805& 0.825& 0.856& \cellcolor{asparagus!40}{-0.884}& 0.840& -0.272& -0.456& -0.488& 0.303& -0.438         \\
          GK(train) & -0.825& -0.852& -0.815& 0.952& -0.843& -0.903& -0.955& -0.972& 0.970& -0.965& -0.645& -0.648& -0.669& 0.611& -0.663& -0.518& -0.808& -0.840& 0.591& -0.779         \\
          GK(test) & -0.285& -0.318& -0.247& 0.629& -0.300& -0.031& -0.130& -0.123& 0.101& -0.095& -0.579& -0.565& -0.569& 0.412& -0.575& -0.276& -0.589& -0.658& 0.470& -0.549         \\ \midrule
          $\mathcal{KP}$ (Train) & 0.482& 0.418& 0.449& -0.072& 0.433& 0.773& 0.711& 0.702& -0.714& 0.745& 0.769& 0.769& 0.782& -0.682& 0.780& \cellcolor{asparagus!40}{0.500}& 0.809& 0.852& \cellcolor{asparagus!40}{-0.755}& \cellcolor{asparagus!40}{0.777}         \\
          $\mathcal{KP}$ (Test) & \cellcolor{asparagus!40}{0.786}& \cellcolor{asparagus!40}{0.731}& \cellcolor{asparagus!40}{0.661}& \cellcolor{asparagus!40}{-0.669}& \cellcolor{asparagus!40}{0.721}& \cellcolor{asparagus!40}{0.825}& \cellcolor{asparagus!40}{0.870}& \cellcolor{asparagus!40}{0.864}& \cellcolor{asparagus!40}{-0.861}& \cellcolor{asparagus!40}{0.871}& \cellcolor{asparagus!40}{0.875}& \cellcolor{asparagus!40}{0.887}& \cellcolor{asparagus!40}{0.909}& \cellcolor{asparagus!40}{-0.884}& \cellcolor{asparagus!40}{0.899}& 0.482& \cellcolor{asparagus!40}{0.816}& \cellcolor{asparagus!40}{0.863}& -0.683& 0.776         \\  \bottomrule
\end{tabular}
\end{adjustbox}
\caption{Pearson's linear correlation ($r$) scores computed from the metric scores with respect to the ranking metrics on the standard KG embedding datasets. The KG methods are evaluated after training. Green values are the best. }
\label{tab:main_table_train_test}
\end{minipage}\hfill
\end{table*}
\section{Results and Discussion} \label{sec:evaluation}
We conduct our experiments in response to the following research questions: \textbf{RQ1}: Is there a correlation between the proposed metric and ranking metrics for popular KG embedding methods? \textbf{RQ2}: Can the proposed metric be used to perform early stopping during training? \textbf{RQ3}: What is the computational efficiency of proposed metric wrt ranking metrics for KGE evaluation? 

\noindent \textbf{$\mathcal{KP}$ for faster prototyping of KGE methods:}
Our core hypothesis in the paper is to develop an efficient alternative (proxy) to the ranking metrics. Hence, for a fair evaluation, we use the triples in the test set for computing $\mathcal{KP}$.  
Ideally, this should be able to simulate the evaluation of the ranking metrics on the same (test) set. If true, there exists a high correlation between the two measures, namely the $\mathcal{KP}$ and the ranking metrics. Table \ref{tab:main_table_train_test} shows the linear correlations between the ranking metrics and our method \& baselines. We report the linear(Pearson's) correlation because we would like a linear relationship between the proposed measure and the ranking metric (for brevity, other correlations are in appendix Tables \ref{tab:rho_train_test}, \ref{tab:tau_train_test}). This would help interpret whether an increase/decrease in the measure would cause a corresponding increase/decrease in the ranking metric that we wish to simulate. Specifically we train all the KG embedding methods for a predefined number of epochs and evaluate the finally obtained models to get the ranking metrics and $\mathcal{KP}$. The correlations are then computed between $\mathcal{KP}$ and each of the ranking metrics. 
We observe that $\mathcal{KP}$(test) configuration (triples are sampled from the test set) achieves the highest correlation coefficient value among all the existing geometric and kernel baseline methods in most cases. For instance, on FB15K, $\mathcal{KP}$(test) reports high correlation value of 0.786 with Hits@1, whereas best baseline for this dataset (AVL) has corresponding correlation value as 0.339. Similarly for WN18RR, $\mathcal{KP}$(test) has correlation value of 0.482 compared to AVL with -0.272 correlation with Hits@1.
Conicity and AVL that provide geometric perspective shows mostly low positive correlation with ranking metrics whereas the Graph Kernel based method shows highly negative correlations, making these methods unsuitable for direct applicability. It indicates that the \textit{topology of the KG induced by the learnt representations} seems a good predictor of the performance on similar data distributions with high correlation with ranking metric (answering \textbf{RQ1}).\\
Furthermore, the results also report a configuration $\mathcal{KP}$(train) in which we compute $\mathcal{KP}$ on the triples of the train set and find the correlation with the ranking metrics obtained from the test set. Here our rationale is to study whether the proposed metric would be able to capture the generalizability of the unseen test (real world) data that is of a similar distribution as the training data. Initial results in Table \ref{tab:main_table_train_test} are promising with high correlation of $\mathcal{KP}$(train) with ranking metric. Hence, it may enable the use of $\mathcal{KP}$ in settings without test/validation data while using the available (possibly limited) data for training, for example, in few-shot scenarios. We leave this promising direction of research for future.

\subsection[KP as a criterion for early stopping]{$\mathcal{KP}$ as a criterion for early stopping}\label{sec:kp_early_stopping}
\textbf{Does $\mathcal{KP}$ hold correlation while early stopping?} To know when to stop the training process to prevent overfitting, we must be able to estimate the variance of the model. This is generally done by observing the validation/test set error. Thus, to use a method as a criterion for early stopping, it should be able to predict this generalization error. Table \ref{tab:main_table_train_test} explains that $\mathcal{KP}$(Train) can predict the generalizability of methods on the last epoch, it remains to empirically verify that $\mathcal{KP}$ also predicts the performance at every interval during the training process. Hence, we study the correlations of the proposed method with the ranking metrics for individual KG embedding methods in the intra-method setting. Specifically, for a given method, we obtain the $\mathcal{KP}$ score and the ranking metrics on the test set and compute the correlations at every evaluation interval. Results in Table \ref{tab:abl_intra_test} suggest that $\mathcal{KP}$ has a decent correlation in the intra-method setting. It indicates that $\mathcal{KP}$ could be used in place of the ranking metrics for deciding a criterion on early stopping if the score keeps persistently falling (answering \textbf{RQ2}).

\begin{table}[ht!]
\begin{adjustbox}{width=0.5\textwidth,center}
  \begin{tabular}{l|c|c|c|c|c|c}
    \toprule
    \multicolumn{1}{l|}{\textbf{Datasets}} &
      \multicolumn{3}{c|}{\textbf{FB15K237}} &
      \multicolumn{3}{c}{\textbf{WN18RR}} \\
      \hline
    \textbf{KG methods} & \textit{r} & $\rho$ & $\tau$ & \textit{r} & $\rho$ & $\tau$ \\
    \hline
          TransE & 0.955 & 0.861 & 0.709 & 0.876 & 0.833 & 0.722       \\
          TransH & 0.688& 0.570 & 0.409 & 0.864 & 0.717 & 0.555     \\
          TransR & 0.975& 0.942 & 0.811 & 0.954 & 0.967 & 0.889     \\
          Complex & 0.938 & 0.788 & 0.610 & 0.833 & 0.933 & 0.833       \\
          RotatE & 0.896 & 0.735 & 0.579 &  0.774 & 0.983 & 0.944     \\
          TuckER & 0.906& 0.676 & 0.527 &  0.352  & 0.25 & 0.167    \\
          ConvKB & 0.086 & 0.012 & 0.007 & 0.276 & 0.569 & 0.422     \\ \bottomrule
  \end{tabular}
  \end{adjustbox}
    \caption{Correlation scores computed between $\mathcal{KP}$ and the ranking metric(Hits@10) on the standard KG embedding datasets with the methods evaluated at every interval as the training progresses. Here, \textit{r}: Pearson correlation co-efficient, $\rho$: Spearman's correlation co-efficient, $\tau$: Kendall's Tau.}
    \label{tab:abl_intra_test}
    \vspace{-0.3cm}
\end{table}

\textbf{What is the relative error of early stopping between $\mathcal{KP}$ and Ranking Metric?}
To further cross-validate our response to \textbf{RQ2}, we now compute the absolute relative error between the ranking metrics of the best models selected by $\mathcal{KP}$ and the expected ranking metrics. Ideally, we would expect the performance of the model obtained using this process on unseen test data(preferably of the same distribution) to be close to the best achievable result, i.e., the relative error should be small. This is important as if we were to use any metric for faster prototyping, it should also be a good criterion for model selection(selecting a model with less generalization error) and being efficient. Table \ref{tab:early_stopping_results_appndx} shows that the relative error is marginal, of the order of $10^{-2}$, in most cases(with few exceptions), indicating that $\mathcal{KP}$ could be used for early stopping. The deviation is higher for some methods, such as ConvKB, which had convergence issues. We infer from observed behavior that if the KG embedding method has not converged(to good results), the correlation and, thus, the early stopping prediction may suffer. Despite a few outliers, the promising results 
shall encourage the community to research, develop, and use KGE benchmarking methods that are also computationally efficient.

\begin{table}[ht!]
\vspace{-0.3cm}
\begin{adjustbox}{width=0.5\textwidth,center}
  \begin{tabular}{l|c|c|c|c|c|c}
    \toprule
    \multicolumn{1}{l|}{\textbf{Datasets}} &
      \multicolumn{3}{c|}{\textbf{FB15K237}} &
      \multicolumn{3}{c}{\textbf{WN18RR}} \\
      \hline
    \textbf{KG methods} & hits@1 & hits@10 & MRR & hits@1 & hits@10 & MRR \\
    \hline
          TransE & 0.006 &  0.006 &  0.007 & 0.000 & 0.007  & 0.004       \\
          TransH & 0.045& 0.015 & 0.019& 0.130 & 0.018 & 0023     \\
          TransR & 0.074& 0.045 & 0.053  & 0.242 & 0.062 & 0.016     \\
          Complex & 0.001 & 0.002 & 0.003 & 0.317 & 0.021 & 0.028       \\
          RotatE & 0.022 & 0.009 & 0.007 &  0.017 & 0.005 & 0.009  \\
          TuckER & 0.008& 0.006 & 0.002 &  0.293  & 0.022  & 0.101    \\
          ConvKB & 0.000 & 0.043 & 0.043 & 0.659 & 0.453  & 0.569     \\ \bottomrule
  \end{tabular}
  \end{adjustbox}
    \caption{Early stopping using $\mathcal{KP}$. The values depict the absolute relative error between the metrics of the best models selected using $\mathcal{KP}$ and ranking metrics. }
    \label{tab:early_stopping_results_appndx}
    \vspace{-0.8cm}
\end{table}

\begin{figure} [ht]
    \centering
    \includegraphics[width=0.34\textwidth]{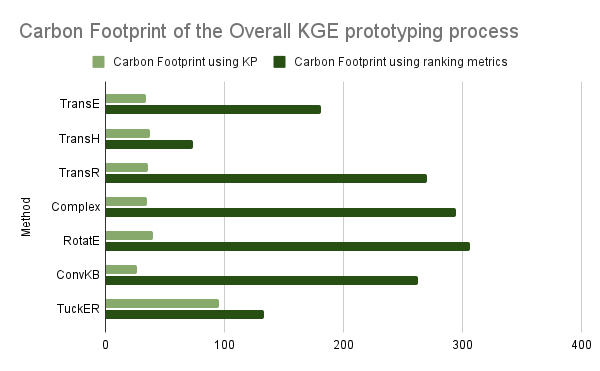}
    \caption{Figure shows a study on the carbon footprint on WN18RR when using $\mathcal{KP}$ vs Hits@10. The x-axis shows the the carbon footprint in g eq $CO_2$.}
    \label{fig:carbon_footprint}
\end{figure}
\subsection{Timing analysis and carbon footprint}
We now study the time taken for running the evaluation (including evaluation at intervals) of the same methods as in section \ref{sec:kp_early_stopping} on the standard datasets. Table \ref{tab:time_analysis_main} shows the evaluation times (validation+test)
and speedup for each method on the respective datasets. The training time is constant for ranking metric and $\mathcal{KP}$. In some cases (ConvKB), we observe $\mathcal{KP}$ achieves a speedup of up to 2576 times on model evaluation time drastically reducing evaluation time from 18 hours to 27 seconds; the latter is even
roughly equal to the carbon footprint of making a cup of coffee\footnote{\url{https://tinyurl.com/4w2xmwry}}. Furthermore, Figure \ref{fig:carbon_footprint} illustrates the carbon footprints \cite{wu2022sustainable,patterson2022carbon} of the overall process (training + evaluation) for the methods when using $\mathcal{KP}$ vs ranking metrics. Due to evaluation time drastically reduced by $\mathcal{KP}$, it also reduces overall carbon footprints. The promising results validate our attempt to develop alternative method for faster prototyping of KGE methods, thus saving carbon footprint
(answering \textbf{RQ3}).
\subsection{Ablation Studies}
We systematically provide several studies to support our evaluation and characterize different properties of $\mathcal{KP}$.

\textbf{Robustness to noise induced by sampling:} 
An important property that makes persistent homology worthwhile is its stability concerning perturbations of the filtration function. 
This means that persistent homology is robust to noise and encodes the intrinsic topological properties of the data \cite{hensel2021survey}. However, in our application of predicting the performance of KG embedding methods, one source of noise is because of sampling the negative and positive triples. It could cause perturbations in the graph topology due to the addition and deletion of edges (cf., Figure \ref{fig:pd}). Therefore, we would like the proposed metric to be stable concerning perturbations. To understand the behavior of $\mathcal{KP}$ against this noise, we conduct a study by incrementally adding samples to the graph and observing the mean and standard deviation of the correlation at each stage. In an ideal case, assuming the KG topology remains similar, the mean correlations should be in a narrow range with slight standard deviations. We observe a similar effect in Figure \ref{fig:robustness_abl} where we report the mean correlation at various fractions of triples sampled, with the standard deviation(error bands). Here, 
the mean correlation coefficients are within the range of 0.06(0.04), and the average standard deviations are about 0.02(0.02) for the FB15K237(WN18RR) dataset. 
This shows that $\mathcal{KP}$ inherits the robustness of the topological data analysis techniques, enabling linear time by sampling from the graph for dense KGs while keeping it robust.

\begin{table}[t]
\begin{adjustbox}{width=0.5\textwidth,center}
  \begin{tabular}{|l|c|c|c|c|c|c|}
    \toprule
    \multicolumn{1}{|l|}{\textbf{Metrics}} &
      \multicolumn{2}{c|}{\textbf{Hits@10}} &
      \multicolumn{2}{c|}{\textbf{$\mathcal{KP}$}} &
      \multicolumn{2}{c|}{\textbf{Speedup $\uparrow$}}\\
      \hline
   \textbf{Dataset} & \textit{FB15K237} & \textit{WN18RR} & \textit{FB15K237} & \textit{WN18RR} & \textit{FB15K237} & \textit{WN18RR} \\
    \hline
    \textbf{split} & \textit{val + test} & \textit{val + test}& \textit{val + test} & \textit{val + test} & \textit{Avg} & \textit{Avg} \\
    \hline
    TransE  & 103.6 & 86.1 & \cellcolor{asparagus!40}0.337 &\cellcolor{asparagus!40}0.120 &x 322.8 &x 754.1 \\
    \hline
    TransH  & 37.1 & 21.2 & \cellcolor{asparagus!40}0.333 &\cellcolor{asparagus!40}0.099  &x 117.0 &x 224.4 \\
    \hline
    TransR  & 192.0 & 137.1 & \cellcolor{asparagus!40}0.352 &\cellcolor{asparagus!40}0.135  &x 572.0 &x 1066.4 \\
    \hline
    Complex & 136.1 &151.4 & \cellcolor{asparagus!40}0.340  & \cellcolor{asparagus!40}0.142  & x 420.1 &x 1121.7  \\
    \hline
    RotatE & 174.2 &155.2 & \cellcolor{asparagus!40}0.359  & \cellcolor{asparagus!40}0.142  & x 509.5 &x 1145.6  \\
    \hline
    TuckER & 94.8 &22.1 & \cellcolor{asparagus!40}0.332  & \cellcolor{asparagus!40}0.098  & x 300.0 &x 241.9  \\
    \hline
    ConvKB & 1106.0 & 138.1 & \cellcolor{asparagus!40}0.451  & \cellcolor{asparagus!40}0.139  & x 2576.6 &x 1044.3 \\
    \bottomrule
  \end{tabular}
  \end{adjustbox}
    \caption{Evaluation Metric Comparison wrt Computing Time (in minutes, for 100 epochs). Column 1 denotes popular KGE methods. Depicted values denote evaluation(validation+test) time for computing a metric and corresponding speedup using $\mathcal{KP}$. $\mathcal{KP}$ significantly reduces the evaluation time (green).}
    \label{tab:time_analysis_main}
    \vspace{-0.3cm}
\end{table}

\begin{figure} [ht!]
    \centering
    \begin{subfigure}[b]{0.49\linewidth}
        \includegraphics[width=\linewidth]{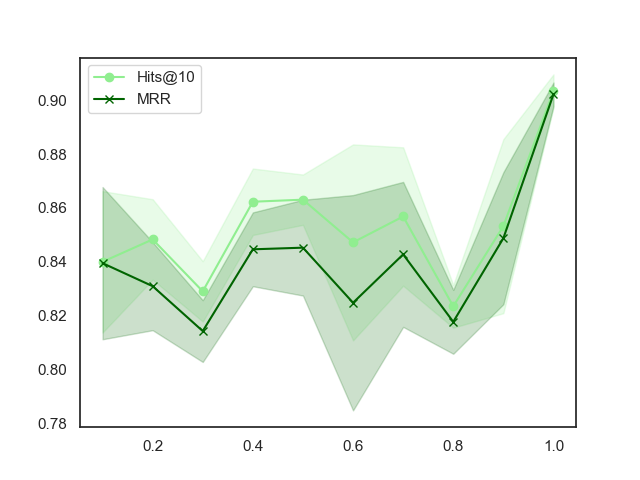}
        \label{fig:a}
    \end{subfigure} %
    \begin{subfigure}[b]{0.49\linewidth}
        \includegraphics[width=\linewidth]{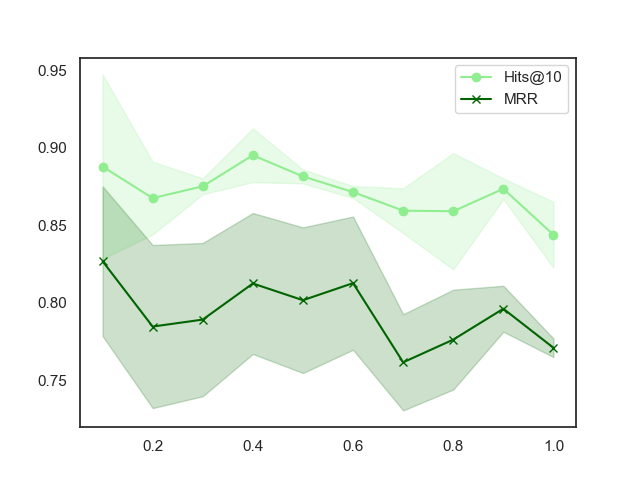}
        \label{fig:b}
    \end{subfigure} %
    \caption{Effect of sample size on the correlation coefficient between $\mathcal{KP}$ and the ranking metrics on FB15K237 (left diagram) and WN18RR datasets. The correlations for the different sampling fractions are comparable. Also, the standard deviation is less, indicating the method's robustness due to changes in local topology while doing sampling.}
    \label{fig:robustness_abl}
\end{figure}

\textbf{Generalizability Study- Correlation with Stratified Ranking Metric:}
\citet{mohamed2020popularity} proposed a new stratified metric (strat-metric) that can be tuned to focus on the unpopular entities, unlike the standard ranking metrics, using certain hyperparameters ($\beta_e \in (-1,1)$, $\beta_r \in (-1,1)$). Special cases of these hyperparameters give the micro and macro ranking metrics. Goal here is to study whether our method can predict strat-metric for the special case of $\beta_e=1, \beta_r=0$, which estimates the performance for unpopular(sparse) entities. Also, we aim to observe if $\mathcal{KP}$ holds a correlation with variants of the ranking metric concerning its generalization ability.
The results (cf., Table \ref{tab:strat_kp_corr}) shows that $\mathcal{KP}$ has a good correlation with each of the stratified ranking metrics which indicate $\mathcal{KP}$ also takes into account the local geometry/topology \cite{adams2021topology} of the sparse entities and relations.  
\begin{table}[ht!]
\begin{adjustbox}{width=0.5\textwidth,center}
  \begin{tabular}{l|c|c|c|c|c|c}
    \toprule
    \multicolumn{1}{l|}{\textbf{Datasets}} &
      \multicolumn{3}{c|}{\textbf{FB15K237}} &
      \multicolumn{3}{c}{\textbf{WN18RR}} \\
      \hline
    \textbf{Metrics} & \textit{r} & $\rho$ & $\tau$ & \textit{r} & $\rho$ & $\tau$ \\
    \hline
    Strat-Hits@1 ($\uparrow$) &0.965& 0.857& 0.714& 0.513& 0.482& 0.411 \\
    Strat-Hits@3 ($\uparrow$) &0.898& 0.821& 0.619& 0.691& 0.714& 0.524 \\
    Strat-Hits@10 ($\uparrow$) &0.871& 0.821& 0.619& 0.870& 0.750& 0.619 \\
    Strat-MR ($\downarrow$) &-0.813& -0.679& -0.524& -0.701& -0.821& -0.619 \\
    Strat-MRR ($\uparrow$) &0.806& 0.679& 0.524& 0.658& 0.714& 0.524 \\
    \bottomrule
  \end{tabular}
  \end{adjustbox}
  \caption{$\mathcal{KP}$ correlation with stratified ranking metrics as proposed in \cite{mohamed2020popularity}.}
    \label{tab:strat_kp_corr}
    \vspace{-0.3cm}
\end{table}
\subsection{Summary of Results and Open Directions}
To sum up, following are key observations gleaning from empirical studies: 1) $\mathcal{KP}$ shows high correlation with ranking metrics (Table \ref{tab:main_table_train_test}) and its stratified version (Table \ref{tab:strat_kp_corr}). It paves the way for the use of $\mathcal{KP}$ for faster prototyping of KGE methods. 2) $\mathcal{KP}$ holds a high correlation at every interval during the training process (Table \ref{tab:abl_intra_test}) with marginal relative error; hence, it could be used for early stopping of a KGE method. 3) $\mathcal{KP}$ inherits key properties of persistent homology, i.e., it is robust to noise induced by sampling. 4) The overall carbon footprints of the evaluation cycle is drastically reduced if  $\mathcal{KP}$ is preferred over ranking metrics. 

\textbf{What's Next?} We show that topological data analysis based on persistent homology can act as a proxy for ranking metrics with conclusive empirical evidence and supporting theoretical foundations. However, it is the first step toward a more extensive research agenda. We believe substantial work is needed collectively in the research community to develop strong foundations, solving scaling issues (across embedding methods, datasets, KGs, etc.) until persistent homology-based methods are widely adopted.

For example, there could be methods/datasets where the correlation turns out to be a small positive value or even negative, in which case we may not be able to use $\mathcal{KP}$ in the existing form to simulate the ranking metrics for these methods/datasets. In those cases, some alteration may exist for the same and seek further exploration similar to what stratified ranking metric \cite{mohamed2020popularity} does by fixing issues encountered in the ranking metric. Furthermore, theorem \ref{thm_stability_kp_ranking_main} would be a key to understand error bounds when interpreting limited performance (e.g., when the correlation is a small positive). However, this does not limit the use of $\mathcal{KP}$ for KGE methods as it captures and contrasts the topology of the positive and negative sampled graphs learned from these methods, which could be a useful metric by itself. 
In this paper, the emphasis is on the need for evaluation and benchmarking methods that are computationally efficient rather than providing an exhaustive one method fits all metric. We believe that there is much scope for future research in this direction. Some promising directions include 1) better sampling techniques(instead of the random sampling used in this paper), 2) rigorous theoretical analysis drawing the boundaries on the abilities/limitations across settings (zero-shot, few-shot, etc.), 3) using KP (and related metrics) in continuous spaces, that could be differentiable and approximate the ranking metrics, in the optimization process of KGE methods.

\section{Conclusion} \label{sec:conclusion}

We propose \textit{Knowledge Persistence} ($\mathcal{KP}$), first work that uses techniques from topological data analysis, as a predictor of the ranking metrics to efficiently evaluate the performance of KG embedding approaches. With theoretical and empirical evidences, our work brings efficiency at center stage in the evaluation of KG embedding methods along with traditional way of reporting their performance. 
Finally, with \underline{efficiency} as crucial criteria for evaluation, we hope KGE research becomes more inclusive and accessible to the broader research community with limited computing resources.

\textbf{Acknowledgment}
This work was partly supported by JSPS KAKENHI Grant Number JP21K21280. 
\bibliographystyle{ACM-Reference-Format}
\bibliography{bibliography}


\begin{thebibliography}{65}


\ifx \showCODEN    \undefined \def \showCODEN     #1{\unskip}     \fi
\ifx \showDOI      \undefined \def \showDOI       #1{#1}\fi
\ifx \showISBNx    \undefined \def \showISBNx     #1{\unskip}     \fi
\ifx \showISBNxiii \undefined \def \showISBNxiii  #1{\unskip}     \fi
\ifx \showISSN     \undefined \def \showISSN      #1{\unskip}     \fi
\ifx \showLCCN     \undefined \def \showLCCN      #1{\unskip}     \fi
\ifx \shownote     \undefined \def \shownote      #1{#1}          \fi
\ifx \showarticletitle \undefined \def \showarticletitle #1{#1}   \fi
\ifx \showURL      \undefined \def \showURL       {\relax}        \fi
\providecommand\bibfield[2]{#2}
\providecommand\bibinfo[2]{#2}
\providecommand\natexlab[1]{#1}
\providecommand\showeprint[2][]{arXiv:#2}

\bibitem[\protect\citeauthoryear{Adams and Moy}{Adams and Moy}{2021}]%
        {adams2021topology}
\bibfield{author}{\bibinfo{person}{Henry Adams} {and} \bibinfo{person}{Michael
  Moy}.} \bibinfo{year}{2021}\natexlab{}.
\newblock \showarticletitle{Topology Applied to Machine Learning: From Global
  to Local}.
\newblock \bibinfo{journal}{\emph{Frontiers in Artificial Intelligence}}
  \bibinfo{volume}{4} (\bibinfo{year}{2021}), \bibinfo{pages}{54}.
\newblock


\bibitem[\protect\citeauthoryear{Ali, Berrendorf, Hoyt, Vermue, Galkin,
  Sharifzadeh, Fischer, Tresp, and Lehmann}{Ali et~al\mbox{.}}{2021}]%
        {ali2021bringing}
\bibfield{author}{\bibinfo{person}{Mehdi Ali}, \bibinfo{person}{Max
  Berrendorf}, \bibinfo{person}{Charles~Tapley Hoyt}, \bibinfo{person}{Laurent
  Vermue}, \bibinfo{person}{Mikhail Galkin}, \bibinfo{person}{Sahand
  Sharifzadeh}, \bibinfo{person}{Asja Fischer}, \bibinfo{person}{Volker Tresp},
  {and} \bibinfo{person}{Jens Lehmann}.} \bibinfo{year}{2021}\natexlab{}.
\newblock \showarticletitle{Bringing light into the dark: A large-scale
  evaluation of knowledge graph embedding models under a unified framework}.
\newblock \bibinfo{journal}{\emph{IEEE Transactions on Pattern Analysis and
  Machine Intelligence}} (\bibinfo{year}{2021}).
\newblock


\bibitem[\protect\citeauthoryear{Bala{\v{z}}evi{\'c}, Allen, and
  Hospedales}{Bala{\v{z}}evi{\'c} et~al\mbox{.}}{2019}]%
        {balavzevic2019tucker}
\bibfield{author}{\bibinfo{person}{Ivana Bala{\v{z}}evi{\'c}},
  \bibinfo{person}{Carl Allen}, {and} \bibinfo{person}{Timothy Hospedales}.}
  \bibinfo{year}{2019}\natexlab{}.
\newblock \showarticletitle{TuckER: Tensor Factorization for Knowledge Graph
  Completion}. In \bibinfo{booktitle}{\emph{Proceedings of the 2019 Conference
  on Empirical Methods in Natural Language Processing and the 9th International
  Joint Conference on Natural Language Processing (EMNLP-IJCNLP)}}.
  \bibinfo{pages}{5185--5194}.
\newblock


\bibitem[\protect\citeauthoryear{Bansal, Tiwari, and Rivero}{Bansal
  et~al\mbox{.}}{2020}]%
        {bansal2020impact}
\bibfield{author}{\bibinfo{person}{Iti Bansal}, \bibinfo{person}{Sudhanshu
  Tiwari}, {and} \bibinfo{person}{Carlos~R Rivero}.}
  \bibinfo{year}{2020}\natexlab{}.
\newblock \showarticletitle{The impact of negative triple generation strategies
  and anomalies on knowledge graph completion}. In
  \bibinfo{booktitle}{\emph{Proceedings of the 29th ACM International
  Conference on Information \& Knowledge Management}}. \bibinfo{pages}{45--54}.
\newblock


\bibitem[\protect\citeauthoryear{Bastos, Singh, Nadgeri, Shekarpour, Mulang,
  and Hoffart}{Bastos et~al\mbox{.}}{2021}]%
        {bastos2021hopfe}
\bibfield{author}{\bibinfo{person}{Anson Bastos}, \bibinfo{person}{Kuldeep
  Singh}, \bibinfo{person}{Abhishek Nadgeri}, \bibinfo{person}{Saeedeh
  Shekarpour}, \bibinfo{person}{Isaiah~Onando Mulang}, {and}
  \bibinfo{person}{Johannes Hoffart}.} \bibinfo{year}{2021}\natexlab{}.
\newblock \showarticletitle{Hopfe: Knowledge graph representation learning
  using inverse hopf fibrations}. In \bibinfo{booktitle}{\emph{Proceedings of
  the 30th ACM International Conference on Information \& Knowledge
  Management}}. \bibinfo{pages}{89--99}.
\newblock


\bibitem[\protect\citeauthoryear{Berrendorf, Faerman, Vermue, and
  Tresp}{Berrendorf et~al\mbox{.}}{2020}]%
        {berrendorf2020interpretable}
\bibfield{author}{\bibinfo{person}{Max Berrendorf}, \bibinfo{person}{Evgeniy
  Faerman}, \bibinfo{person}{Laurent Vermue}, {and} \bibinfo{person}{Volker
  Tresp}.} \bibinfo{year}{2020}\natexlab{}.
\newblock \showarticletitle{Interpretable and Fair Comparison of Link
  Prediction or Entity Alignment Methods}. In \bibinfo{booktitle}{\emph{2020
  IEEE/WIC/ACM International Joint Conference on Web Intelligence and
  Intelligent Agent Technology (WI-IAT)}}. IEEE, \bibinfo{pages}{371--374}.
\newblock


\bibitem[\protect\citeauthoryear{Bollacker, Cook, and Tufts}{Bollacker
  et~al\mbox{.}}{2007}]%
        {DBLP:conf/aaai/BollackerCT07}
\bibfield{author}{\bibinfo{person}{Kurt~D. Bollacker},
  \bibinfo{person}{Robert~P. Cook}, {and} \bibinfo{person}{Patrick Tufts}.}
  \bibinfo{year}{2007}\natexlab{}.
\newblock \showarticletitle{{Freebase: A Shared Database of Structured General
  Human Knowledge}}. In \bibinfo{booktitle}{\emph{AAAI}}.
\newblock


\bibitem[\protect\citeauthoryear{Bordes, Usunier, Garcia-Duran, Weston, and
  Yakhnenko}{Bordes et~al\mbox{.}}{2013}]%
        {bordes2013translating}
\bibfield{author}{\bibinfo{person}{Antoine Bordes}, \bibinfo{person}{Nicolas
  Usunier}, \bibinfo{person}{Alberto Garcia-Duran}, \bibinfo{person}{Jason
  Weston}, {and} \bibinfo{person}{Oksana Yakhnenko}.}
  \bibinfo{year}{2013}\natexlab{}.
\newblock \showarticletitle{Translating embeddings for modeling
  multi-relational data}. In \bibinfo{booktitle}{\emph{NeurlPS}}.
  \bibinfo{pages}{1--9}.
\newblock


\bibitem[\protect\citeauthoryear{Bordes, Weston, Collobert, and Bengio}{Bordes
  et~al\mbox{.}}{2011}]%
        {bordes2011learning}
\bibfield{author}{\bibinfo{person}{Antoine Bordes}, \bibinfo{person}{Jason
  Weston}, \bibinfo{person}{Ronan Collobert}, {and} \bibinfo{person}{Yoshua
  Bengio}.} \bibinfo{year}{2011}\natexlab{}.
\newblock \showarticletitle{Learning structured embeddings of knowledge bases}.
  In \bibinfo{booktitle}{\emph{Twenty-fifth AAAI conference on artificial
  intelligence}}.
\newblock


\bibitem[\protect\citeauthoryear{Borgwardt and Kriegel}{Borgwardt and
  Kriegel}{2005}]%
        {borgwardt2005shortest}
\bibfield{author}{\bibinfo{person}{Karsten~M Borgwardt} {and}
  \bibinfo{person}{Hans-Peter Kriegel}.} \bibinfo{year}{2005}\natexlab{}.
\newblock \showarticletitle{Shortest-path kernels on graphs}. In
  \bibinfo{booktitle}{\emph{Fifth IEEE international conference on data mining
  (ICDM'05)}}. IEEE, \bibinfo{pages}{8--pp}.
\newblock


\bibitem[\protect\citeauthoryear{Borgwardt, Petri, Vishwanathan, and
  Kriegel}{Borgwardt et~al\mbox{.}}{2007}]%
        {DBLP:conf/mlg/BorgwardtPVK07}
\bibfield{author}{\bibinfo{person}{Karsten~M. Borgwardt},
  \bibinfo{person}{Tobias Petri}, \bibinfo{person}{S.~V.~N. Vishwanathan},
  {and} \bibinfo{person}{Hans{-}Peter Kriegel}.}
  \bibinfo{year}{2007}\natexlab{}.
\newblock \showarticletitle{An Efficient Sampling Scheme For Comparison of
  Large Graphs}. In \bibinfo{booktitle}{\emph{Mining and Learning with Graphs,
  {MLG}}}.
\newblock


\bibitem[\protect\citeauthoryear{Broder, Najork, and Wiener}{Broder
  et~al\mbox{.}}{2003}]%
        {broder2003efficient}
\bibfield{author}{\bibinfo{person}{Andrei~Z Broder}, \bibinfo{person}{Marc
  Najork}, {and} \bibinfo{person}{Janet~L Wiener}.}
  \bibinfo{year}{2003}\natexlab{}.
\newblock \showarticletitle{Efficient URL caching for world wide web crawling}.
  In \bibinfo{booktitle}{\emph{Proceedings of the 12th international conference
  on World Wide Web}}. \bibinfo{pages}{679--689}.
\newblock


\bibitem[\protect\citeauthoryear{Carri{\`e}re, Cuturi, and Oudot}{Carri{\`e}re
  et~al\mbox{.}}{2017}]%
        {pmlr-v70-carriere17a}
\bibfield{author}{\bibinfo{person}{Mathieu Carri{\`e}re},
  \bibinfo{person}{Marco Cuturi}, {and} \bibinfo{person}{Steve Oudot}.}
  \bibinfo{year}{2017}\natexlab{}.
\newblock \showarticletitle{Sliced {W}asserstein Kernel for Persistence
  Diagrams}. In \bibinfo{booktitle}{\emph{Proceedings of the 34th International
  Conference on Machine Learning}} \emph{(\bibinfo{series}{Proceedings of
  Machine Learning Research})}, Vol.~\bibinfo{volume}{70}.
  \bibinfo{publisher}{PMLR}, \bibinfo{pages}{664--673}.
\newblock


\bibitem[\protect\citeauthoryear{Chok}{Chok}{2010}]%
        {chok2010pearson}
\bibfield{author}{\bibinfo{person}{Nian~Shong Chok}.}
  \bibinfo{year}{2010}\natexlab{}.
\newblock \emph{\bibinfo{title}{Pearson's versus Spearman's and Kendall's
  correlation coefficients for continuous data}}.
\newblock \bibinfo{thesistype}{Ph.D. Dissertation}. \bibinfo{school}{University
  of Pittsburgh}.
\newblock


\bibitem[\protect\citeauthoryear{Edelsbrunner, Letscher, and
  Zomorodian}{Edelsbrunner et~al\mbox{.}}{2000}]%
        {edelsbrunner2000topological}
\bibfield{author}{\bibinfo{person}{Herbert Edelsbrunner},
  \bibinfo{person}{David Letscher}, {and} \bibinfo{person}{Afra Zomorodian}.}
  \bibinfo{year}{2000}\natexlab{}.
\newblock \showarticletitle{Topological persistence and simplification}. In
  \bibinfo{booktitle}{\emph{Proceedings 41st annual symposium on foundations of
  computer science}}. IEEE, \bibinfo{pages}{454--463}.
\newblock


\bibitem[\protect\citeauthoryear{Fasy, Qin, Summa, and Wenk}{Fasy
  et~al\mbox{.}}{2020}]%
        {fasy2020comparing}
\bibfield{author}{\bibinfo{person}{Brittany Fasy}, \bibinfo{person}{Yu Qin},
  \bibinfo{person}{Brian Summa}, {and} \bibinfo{person}{Carola Wenk}.}
  \bibinfo{year}{2020}\natexlab{}.
\newblock \showarticletitle{Comparing Distance Metrics on Vectorized
  Persistence Summaries}. In \bibinfo{booktitle}{\emph{NeurIPS 2020 Workshop on
  Topological Data Analysis and Beyond}}.
\newblock


\bibitem[\protect\citeauthoryear{Gal{\'a}rraga, Hose, and
  Schenkel}{Gal{\'a}rraga et~al\mbox{.}}{2014}]%
        {galarraga2014partout}
\bibfield{author}{\bibinfo{person}{Luis Gal{\'a}rraga}, \bibinfo{person}{Katja
  Hose}, {and} \bibinfo{person}{Ralf Schenkel}.}
  \bibinfo{year}{2014}\natexlab{}.
\newblock \showarticletitle{Partout: a distributed engine for efficient RDF
  processing}. In \bibinfo{booktitle}{\emph{Proceedings of the 23rd
  International Conference on World Wide Web}}. \bibinfo{pages}{267--268}.
\newblock


\bibitem[\protect\citeauthoryear{Gesese, Biswas, Alam, and Sack}{Gesese
  et~al\mbox{.}}{2019}]%
        {gesese2019survey}
\bibfield{author}{\bibinfo{person}{Genet~Asefa Gesese}, \bibinfo{person}{Russa
  Biswas}, \bibinfo{person}{Mehwish Alam}, {and} \bibinfo{person}{Harald
  Sack}.} \bibinfo{year}{2019}\natexlab{}.
\newblock \showarticletitle{A survey on knowledge graph embeddings with
  literals: Which model links better literal-ly?}
\newblock \bibinfo{journal}{\emph{Semantic Web}} \bibinfo{number}{Preprint}
  (\bibinfo{year}{2019}), \bibinfo{pages}{1--31}.
\newblock


\bibitem[\protect\citeauthoryear{Hensel, Moor, and Rieck}{Hensel
  et~al\mbox{.}}{2021}]%
        {hensel2021survey}
\bibfield{author}{\bibinfo{person}{Felix Hensel}, \bibinfo{person}{Michael
  Moor}, {and} \bibinfo{person}{Bastian Rieck}.}
  \bibinfo{year}{2021}\natexlab{}.
\newblock \showarticletitle{A survey of topological machine learning methods}.
\newblock \bibinfo{journal}{\emph{Frontiers in Artificial Intelligence}}
  \bibinfo{volume}{4} (\bibinfo{year}{2021}), \bibinfo{pages}{52}.
\newblock


\bibitem[\protect\citeauthoryear{Jain, Kalo, Balke, and Krestel}{Jain
  et~al\mbox{.}}{2021}]%
        {jain2021embeddings}
\bibfield{author}{\bibinfo{person}{Nitisha Jain},
  \bibinfo{person}{Jan-Christoph Kalo}, \bibinfo{person}{Wolf-Tilo Balke},
  {and} \bibinfo{person}{Ralf Krestel}.} \bibinfo{year}{2021}\natexlab{}.
\newblock \showarticletitle{Do Embeddings Actually Capture Knowledge Graph
  Semantics?}. In \bibinfo{booktitle}{\emph{European Semantic Web Conference}}.
  Springer, \bibinfo{pages}{143--159}.
\newblock


\bibitem[\protect\citeauthoryear{Jain, Rathi, Chakrabarti, et~al\mbox{.}}{Jain
  et~al\mbox{.}}{2020}]%
        {jain2020knowledge}
\bibfield{author}{\bibinfo{person}{Prachi Jain}, \bibinfo{person}{Sushant
  Rathi}, \bibinfo{person}{Soumen Chakrabarti}, {et~al\mbox{.}}}
  \bibinfo{year}{2020}\natexlab{}.
\newblock \showarticletitle{Knowledge base completion: Baseline strikes back
  (again)}.
\newblock \bibinfo{journal}{\emph{arXiv preprint arXiv:2005.00804}}
  (\bibinfo{year}{2020}).
\newblock


\bibitem[\protect\citeauthoryear{Ji, Pan, Cambria, Marttinen, and Yu}{Ji
  et~al\mbox{.}}{2021}]%
        {ji2020survey}
\bibfield{author}{\bibinfo{person}{Shaoxiong Ji}, \bibinfo{person}{Shirui Pan},
  \bibinfo{person}{Erik Cambria}, \bibinfo{person}{Pekka Marttinen}, {and}
  \bibinfo{person}{Philip~S Yu}.} \bibinfo{year}{2021}\natexlab{}.
\newblock \showarticletitle{A survey on knowledge graphs: Representation,
  acquisition and applications}.
\newblock \bibinfo{journal}{\emph{EEE Transactions on Neural Networks and
  Learning Systems}} (\bibinfo{year}{2021}).
\newblock


\bibitem[\protect\citeauthoryear{Kadlec, Bajgar, and Kleindienst}{Kadlec
  et~al\mbox{.}}{2017}]%
        {kadlec2017knowledge}
\bibfield{author}{\bibinfo{person}{Rudolf Kadlec},
  \bibinfo{person}{Ond{\v{r}}ej Bajgar}, {and} \bibinfo{person}{Jan
  Kleindienst}.} \bibinfo{year}{2017}\natexlab{}.
\newblock \showarticletitle{Knowledge Base Completion: Baselines Strike Back}.
  In \bibinfo{booktitle}{\emph{Proceedings of the 2nd Workshop on
  Representation Learning for NLP}}. \bibinfo{pages}{69--74}.
\newblock


\bibitem[\protect\citeauthoryear{Li, Ji, Fu, Ge, Xu, Chen, and Zhang}{Li
  et~al\mbox{.}}{2021}]%
        {li2021efficient}
\bibfield{author}{\bibinfo{person}{Zelong Li}, \bibinfo{person}{Jianchao Ji},
  \bibinfo{person}{Zuohui Fu}, \bibinfo{person}{Yingqiang Ge},
  \bibinfo{person}{Shuyuan Xu}, \bibinfo{person}{Chong Chen}, {and}
  \bibinfo{person}{Yongfeng Zhang}.} \bibinfo{year}{2021}\natexlab{}.
\newblock \showarticletitle{Efficient non-sampling knowledge graph embedding}.
  In \bibinfo{booktitle}{\emph{Proceedings of the Web Conference 2021}}.
  \bibinfo{pages}{1727--1736}.
\newblock


\bibitem[\protect\citeauthoryear{Lim, Wang, Padmanabhan, Vitter, and
  Agarwal}{Lim et~al\mbox{.}}{2003}]%
        {lim2003dynamic}
\bibfield{author}{\bibinfo{person}{Lipyeow Lim}, \bibinfo{person}{Min Wang},
  \bibinfo{person}{Sriram Padmanabhan}, \bibinfo{person}{Jeffrey~Scott Vitter},
  {and} \bibinfo{person}{Ramesh Agarwal}.} \bibinfo{year}{2003}\natexlab{}.
\newblock \showarticletitle{Dynamic maintenance of web indexes using
  landmarks}. In \bibinfo{booktitle}{\emph{Proceedings of the 12th
  international conference on World Wide Web}}. \bibinfo{pages}{102--111}.
\newblock


\bibitem[\protect\citeauthoryear{Lin, Liu, Sun, Liu, and Zhu}{Lin
  et~al\mbox{.}}{2015}]%
        {lin2015learning}
\bibfield{author}{\bibinfo{person}{Yankai Lin}, \bibinfo{person}{Zhiyuan Liu},
  \bibinfo{person}{Maosong Sun}, \bibinfo{person}{Yang Liu}, {and}
  \bibinfo{person}{Xuan Zhu}.} \bibinfo{year}{2015}\natexlab{}.
\newblock \showarticletitle{Learning entity and relation embeddings for
  knowledge graph completion}. In \bibinfo{booktitle}{\emph{Proceedings of the
  AAAI Conference on Artificial Intelligence}}, Vol.~\bibinfo{volume}{29}.
\newblock


\bibitem[\protect\citeauthoryear{Miller}{Miller}{1995}]%
        {miller1995wordnet}
\bibfield{author}{\bibinfo{person}{George~A Miller}.}
  \bibinfo{year}{1995}\natexlab{}.
\newblock \showarticletitle{WordNet: a lexical database for English}.
\newblock \bibinfo{journal}{\emph{Commun. ACM}} \bibinfo{volume}{38},
  \bibinfo{number}{11} (\bibinfo{year}{1995}), \bibinfo{pages}{39--41}.
\newblock


\bibitem[\protect\citeauthoryear{Mohamed, Parambath, Kaoudi, and
  Aboulnaga}{Mohamed et~al\mbox{.}}{2020}]%
        {mohamed2020popularity}
\bibfield{author}{\bibinfo{person}{Aisha Mohamed}, \bibinfo{person}{Shameem
  Parambath}, \bibinfo{person}{Zoi Kaoudi}, {and} \bibinfo{person}{Ashraf
  Aboulnaga}.} \bibinfo{year}{2020}\natexlab{}.
\newblock \showarticletitle{Popularity agnostic evaluation of knowledge graph
  embeddings}. In \bibinfo{booktitle}{\emph{Conference on Uncertainty in
  Artificial Intelligence}}. PMLR, \bibinfo{pages}{1059--1068}.
\newblock


\bibitem[\protect\citeauthoryear{Moor, Horn, Rieck, and Borgwardt}{Moor
  et~al\mbox{.}}{2020}]%
        {moor2020topological}
\bibfield{author}{\bibinfo{person}{Michael Moor}, \bibinfo{person}{Max Horn},
  \bibinfo{person}{Bastian Rieck}, {and} \bibinfo{person}{Karsten Borgwardt}.}
  \bibinfo{year}{2020}\natexlab{}.
\newblock \showarticletitle{Topological autoencoders}. In
  \bibinfo{booktitle}{\emph{International conference on machine learning}}.
  PMLR, \bibinfo{pages}{7045--7054}.
\newblock


\bibitem[\protect\citeauthoryear{Nadjahi, Durmus, Jacob, Badeau, and
  Simsekli}{Nadjahi et~al\mbox{.}}{2021}]%
        {nadjahi2021fast}
\bibfield{author}{\bibinfo{person}{Kimia Nadjahi}, \bibinfo{person}{Alain
  Durmus}, \bibinfo{person}{Pierre~E Jacob}, \bibinfo{person}{Roland Badeau},
  {and} \bibinfo{person}{Umut Simsekli}.} \bibinfo{year}{2021}\natexlab{}.
\newblock \showarticletitle{Fast Approximation of the Sliced-Wasserstein
  Distance Using Concentration of Random Projections}.
\newblock \bibinfo{journal}{\emph{Advances in Neural Information Processing
  Systems}}  \bibinfo{volume}{34} (\bibinfo{year}{2021}).
\newblock


\bibitem[\protect\citeauthoryear{Nayyeri, Xu, Yaghoobzadeh, Yazdi, and
  Lehmann}{Nayyeri et~al\mbox{.}}{2019}]%
        {nayyeri2019toward}
\bibfield{author}{\bibinfo{person}{Mojtaba Nayyeri}, \bibinfo{person}{Chengjin
  Xu}, \bibinfo{person}{Yadollah Yaghoobzadeh}, \bibinfo{person}{Hamed~Shariat
  Yazdi}, {and} \bibinfo{person}{Jens Lehmann}.}
  \bibinfo{year}{2019}\natexlab{}.
\newblock \showarticletitle{Toward Understanding The Effect Of Loss function On
  Then Performance Of Knowledge Graph Embedding}.
\newblock \bibinfo{journal}{\emph{arXiv preprint arXiv:1909.00519}}
  (\bibinfo{year}{2019}).
\newblock


\bibitem[\protect\citeauthoryear{Nguyen, Nguyen, Phung, et~al\mbox{.}}{Nguyen
  et~al\mbox{.}}{2018}]%
        {nguyen2018novel}
\bibfield{author}{\bibinfo{person}{Tu~Dinh Nguyen}, \bibinfo{person}{Dat~Quoc
  Nguyen}, \bibinfo{person}{Dinh Phung}, {et~al\mbox{.}}}
  \bibinfo{year}{2018}\natexlab{}.
\newblock \showarticletitle{A Novel Embedding Model for Knowledge Base
  Completion Based on Convolutional Neural Network}. In
  \bibinfo{booktitle}{\emph{NAACL}}. \bibinfo{pages}{327--333}.
\newblock


\bibitem[\protect\citeauthoryear{Patterson, Gonzalez, H{\"o}lzle, Le, Liang,
  Munguia, Rothchild, So, Texier, and Dean}{Patterson et~al\mbox{.}}{2022}]%
        {patterson2022carbon}
\bibfield{author}{\bibinfo{person}{David Patterson}, \bibinfo{person}{Joseph
  Gonzalez}, \bibinfo{person}{Urs H{\"o}lzle}, \bibinfo{person}{Quoc Le},
  \bibinfo{person}{Chen Liang}, \bibinfo{person}{Lluis-Miquel Munguia},
  \bibinfo{person}{Daniel Rothchild}, \bibinfo{person}{David So},
  \bibinfo{person}{Maud Texier}, {and} \bibinfo{person}{Jeff Dean}.}
  \bibinfo{year}{2022}\natexlab{}.
\newblock \showarticletitle{The Carbon Footprint of Machine Learning Training
  Will Plateau, Then Shrink}.
\newblock \bibinfo{journal}{\emph{arXiv preprint arXiv:2204.05149}}
  (\bibinfo{year}{2022}).
\newblock


\bibitem[\protect\citeauthoryear{Peng, Chen, Lin, and Stevenson}{Peng
  et~al\mbox{.}}{2021}]%
        {peng2021highly}
\bibfield{author}{\bibinfo{person}{Xutan Peng}, \bibinfo{person}{Guanyi Chen},
  \bibinfo{person}{Chenghua Lin}, {and} \bibinfo{person}{Mark Stevenson}.}
  \bibinfo{year}{2021}\natexlab{}.
\newblock \showarticletitle{Highly Efficient Knowledge Graph Embedding Learning
  with Orthogonal Procrustes Analysis}. In \bibinfo{booktitle}{\emph{NAACL}}.
  \bibinfo{pages}{2364--2375}.
\newblock


\bibitem[\protect\citeauthoryear{Pezeshkpour, Tian, and Singh}{Pezeshkpour
  et~al\mbox{.}}{2020}]%
        {pezeshkpour2020revisiting}
\bibfield{author}{\bibinfo{person}{Pouya Pezeshkpour}, \bibinfo{person}{Yifan
  Tian}, {and} \bibinfo{person}{Sameer Singh}.}
  \bibinfo{year}{2020}\natexlab{}.
\newblock \showarticletitle{Revisiting evaluation of knowledge base completion
  models}. In \bibinfo{booktitle}{\emph{Automated Knowledge Base
  Construction}}.
\newblock


\bibitem[\protect\citeauthoryear{Rieck, Togninalli, Bock, Moor, Horn, Gumbsch,
  and Borgwardt}{Rieck et~al\mbox{.}}{2018}]%
        {rieck2018neural}
\bibfield{author}{\bibinfo{person}{Bastian Rieck}, \bibinfo{person}{Matteo
  Togninalli}, \bibinfo{person}{Christian Bock}, \bibinfo{person}{Michael
  Moor}, \bibinfo{person}{Max Horn}, \bibinfo{person}{Thomas Gumbsch}, {and}
  \bibinfo{person}{Karsten Borgwardt}.} \bibinfo{year}{2018}\natexlab{}.
\newblock \showarticletitle{Neural Persistence: A Complexity Measure for Deep
  Neural Networks Using Algebraic Topology}. In
  \bibinfo{booktitle}{\emph{International Conference on Learning
  Representations}}.
\newblock


\bibitem[\protect\citeauthoryear{Rim, Lawrence, Gashteovski, Niepert, and
  Okazaki}{Rim et~al\mbox{.}}{2021}]%
        {rim2021behavioral}
\bibfield{author}{\bibinfo{person}{Wiem~Ben Rim}, \bibinfo{person}{Carolin
  Lawrence}, \bibinfo{person}{Kiril Gashteovski}, \bibinfo{person}{Mathias
  Niepert}, {and} \bibinfo{person}{Naoaki Okazaki}.}
  \bibinfo{year}{2021}\natexlab{}.
\newblock \showarticletitle{Behavioral Testing of Knowledge Graph Embedding
  Models for Link Prediction}. In \bibinfo{booktitle}{\emph{3rd Conference on
  Automated Knowledge Base Construction}}.
\newblock


\bibitem[\protect\citeauthoryear{Safavi and Koutra}{Safavi and Koutra}{2020}]%
        {safavi2020codex}
\bibfield{author}{\bibinfo{person}{Tara Safavi} {and} \bibinfo{person}{Danai
  Koutra}.} \bibinfo{year}{2020}\natexlab{}.
\newblock \showarticletitle{CoDEx: A Comprehensive Knowledge Graph Completion
  Benchmark}. In \bibinfo{booktitle}{\emph{Proceedings of the 2020 Conference
  on Empirical Methods in Natural Language Processing (EMNLP)}}.
  \bibinfo{pages}{8328--8350}.
\newblock


\bibitem[\protect\citeauthoryear{Sakia}{Sakia}{1992}]%
        {sakia1992box}
\bibfield{author}{\bibinfo{person}{Remi~M Sakia}.}
  \bibinfo{year}{1992}\natexlab{}.
\newblock \showarticletitle{The Box-Cox transformation technique: a review}.
\newblock \bibinfo{journal}{\emph{Journal of the Royal Statistical Society:
  Series D (The Statistician)}} \bibinfo{volume}{41}, \bibinfo{number}{2}
  (\bibinfo{year}{1992}), \bibinfo{pages}{169--178}.
\newblock


\bibitem[\protect\citeauthoryear{Sharma, Talukdar, et~al\mbox{.}}{Sharma
  et~al\mbox{.}}{2018}]%
        {sharma2018towards}
\bibfield{author}{\bibinfo{person}{Aditya Sharma}, \bibinfo{person}{Partha
  Talukdar}, {et~al\mbox{.}}} \bibinfo{year}{2018}\natexlab{}.
\newblock \showarticletitle{Towards understanding the geometry of knowledge
  graph embeddings}. In \bibinfo{booktitle}{\emph{Proceedings of the 56th
  Annual Meeting of the Association for Computational Linguistics (Volume 1:
  Long Papers)}}. \bibinfo{pages}{122--131}.
\newblock


\bibitem[\protect\citeauthoryear{Singh, Radhakrishna, Both, Shekarpour, Lytra,
  Usbeck, Vyas, Khikmatullaev, Punjani, Lange, et~al\mbox{.}}{Singh
  et~al\mbox{.}}{2018}]%
        {singh2018reinvent}
\bibfield{author}{\bibinfo{person}{Kuldeep Singh},
  \bibinfo{person}{Arun~Sethupat Radhakrishna}, \bibinfo{person}{Andreas Both},
  \bibinfo{person}{Saeedeh Shekarpour}, \bibinfo{person}{Ioanna Lytra},
  \bibinfo{person}{Ricardo Usbeck}, \bibinfo{person}{Akhilesh Vyas},
  \bibinfo{person}{Akmal Khikmatullaev}, \bibinfo{person}{Dharmen Punjani},
  \bibinfo{person}{Christoph Lange}, {et~al\mbox{.}}}
  \bibinfo{year}{2018}\natexlab{}.
\newblock \showarticletitle{Why reinvent the wheel: Let's build question
  answering systems together}. In \bibinfo{booktitle}{\emph{Proceedings of the
  2018 world wide web conference}}. \bibinfo{pages}{1247--1256}.
\newblock


\bibitem[\protect\citeauthoryear{Spearman}{Spearman}{1907}]%
        {spearman_10.2307/1412408}
\bibfield{author}{\bibinfo{person}{C. Spearman}.}
  \bibinfo{year}{1907}\natexlab{}.
\newblock \showarticletitle{Demonstration of Formulæ for True Measurement of
  Correlation}.
\newblock \bibinfo{journal}{\emph{The American Journal of Psychology}}
  \bibinfo{volume}{18}, \bibinfo{number}{2} (\bibinfo{year}{1907}),
  \bibinfo{pages}{161--169}.
\newblock
\showISSN{00029556}
\urldef\tempurl%
\url{http://www.jstor.org/stable/1412408}
\showURL{%
\tempurl}


\bibitem[\protect\citeauthoryear{Speranskaya, Schmitt, and Roth}{Speranskaya
  et~al\mbox{.}}{2020}]%
        {speranskaya2020ranking}
\bibfield{author}{\bibinfo{person}{Marina Speranskaya}, \bibinfo{person}{Martin
  Schmitt}, {and} \bibinfo{person}{Benjamin Roth}.}
  \bibinfo{year}{2020}\natexlab{}.
\newblock \showarticletitle{Ranking vs. Classifying: Measuring Knowledge Base
  Completion Quality}. In \bibinfo{booktitle}{\emph{Automated Knowledge Base
  Construction}}.
\newblock


\bibitem[\protect\citeauthoryear{Sun, Deng, Nie, and Tang}{Sun
  et~al\mbox{.}}{2018}]%
        {sun2018rotate}
\bibfield{author}{\bibinfo{person}{Zhiqing Sun}, \bibinfo{person}{Zhi-Hong
  Deng}, \bibinfo{person}{Jian-Yun Nie}, {and} \bibinfo{person}{Jian Tang}.}
  \bibinfo{year}{2018}\natexlab{}.
\newblock \showarticletitle{RotatE: Knowledge Graph Embedding by Relational
  Rotation in Complex Space}. In \bibinfo{booktitle}{\emph{International
  Conference on Learning Representations}}.
\newblock


\bibitem[\protect\citeauthoryear{Sun, Vashishth, Sanyal, Talukdar, and
  Yang}{Sun et~al\mbox{.}}{2020}]%
        {sun2020re}
\bibfield{author}{\bibinfo{person}{Zhiqing Sun}, \bibinfo{person}{Shikhar
  Vashishth}, \bibinfo{person}{Soumya Sanyal}, \bibinfo{person}{Partha
  Talukdar}, {and} \bibinfo{person}{Yiming Yang}.}
  \bibinfo{year}{2020}\natexlab{}.
\newblock \showarticletitle{A Re-evaluation of Knowledge Graph Completion
  Methods}. In \bibinfo{booktitle}{\emph{Proceedings of the 58th Annual Meeting
  of the Association for Computational Linguistics}}.
  \bibinfo{pages}{5516--5522}.
\newblock


\bibitem[\protect\citeauthoryear{Tabacof and Costabello}{Tabacof and
  Costabello}{2019}]%
        {tabacof2019probability}
\bibfield{author}{\bibinfo{person}{Pedro Tabacof} {and} \bibinfo{person}{Luca
  Costabello}.} \bibinfo{year}{2019}\natexlab{}.
\newblock \showarticletitle{Probability Calibration for Knowledge Graph
  Embedding Models}. In \bibinfo{booktitle}{\emph{International Conference on
  Learning Representations}}.
\newblock


\bibitem[\protect\citeauthoryear{Tiwari, Bansal, and Rivero}{Tiwari
  et~al\mbox{.}}{2021}]%
        {tiwari2021revisiting}
\bibfield{author}{\bibinfo{person}{Sudhanshu Tiwari}, \bibinfo{person}{Iti
  Bansal}, {and} \bibinfo{person}{Carlos~R Rivero}.}
  \bibinfo{year}{2021}\natexlab{}.
\newblock \showarticletitle{Revisiting the evaluation protocol of knowledge
  graph completion methods for link prediction}. In
  \bibinfo{booktitle}{\emph{Proceedings of the Web Conference 2021}}.
  \bibinfo{pages}{809--820}.
\newblock


\bibitem[\protect\citeauthoryear{Trouillon, Welbl, Riedel, Gaussier, and
  Bouchard}{Trouillon et~al\mbox{.}}{2016}]%
        {trouillon2016complex}
\bibfield{author}{\bibinfo{person}{Th{\'e}o Trouillon},
  \bibinfo{person}{Johannes Welbl}, \bibinfo{person}{Sebastian Riedel},
  \bibinfo{person}{{\'E}ric Gaussier}, {and} \bibinfo{person}{Guillaume
  Bouchard}.} \bibinfo{year}{2016}\natexlab{}.
\newblock \showarticletitle{Complex embeddings for simple link prediction}. In
  \bibinfo{booktitle}{\emph{International Conference on Machine Learning}}.
  PMLR, \bibinfo{pages}{2071--2080}.
\newblock


\bibitem[\protect\citeauthoryear{Tu, Ma, Cui, Pei, and Zhu}{Tu
  et~al\mbox{.}}{2019}]%
        {tu2019autone}
\bibfield{author}{\bibinfo{person}{Ke Tu}, \bibinfo{person}{Jianxin Ma},
  \bibinfo{person}{Peng Cui}, \bibinfo{person}{Jian Pei}, {and}
  \bibinfo{person}{Wenwu Zhu}.} \bibinfo{year}{2019}\natexlab{}.
\newblock \showarticletitle{Autone: Hyperparameter optimization for massive
  network embedding}. In \bibinfo{booktitle}{\emph{Proceedings of the 25th ACM
  SIGKDD International Conference on Knowledge Discovery \& Data Mining}}.
  \bibinfo{pages}{216--225}.
\newblock


\bibitem[\protect\citeauthoryear{Turkeš, Montúfar, and Otter}{Turkeš
  et~al\mbox{.}}{2022}]%
        {on_effectiveness_of_PH}
\bibfield{author}{\bibinfo{person}{Renata Turkeš}, \bibinfo{person}{Guido
  Montúfar}, {and} \bibinfo{person}{Nina Otter}.}
  \bibinfo{year}{2022}\natexlab{}.
\newblock \bibinfo{title}{On the effectiveness of persistent homology}.
\newblock
\newblock
\urldef\tempurl%
\url{https://doi.org/10.48550/ARXIV.2206.10551}
\showDOI{\tempurl}


\bibitem[\protect\citeauthoryear{Villani}{Villani}{2009}]%
        {villani2009optimal}
\bibfield{author}{\bibinfo{person}{C. Villani}.}
  \bibinfo{year}{2009}\natexlab{}.
\newblock \showarticletitle{Optimal transport}.
\newblock \bibinfo{journal}{\emph{Grundlehren der {Mathematischen}
  {Wissenschaften} [{Fundamental} {Principles} of {Mathematical} {Sciences}]}}
  \bibinfo{volume}{338} (\bibinfo{year}{2009}).
\newblock


\bibitem[\protect\citeauthoryear{Wang, Wang, Lian, and Gao}{Wang
  et~al\mbox{.}}{2021d}]%
        {wang2021lightweight}
\bibfield{author}{\bibinfo{person}{Haoyu Wang}, \bibinfo{person}{Yaqing Wang},
  \bibinfo{person}{Defu Lian}, {and} \bibinfo{person}{Jing Gao}.}
  \bibinfo{year}{2021}\natexlab{d}.
\newblock \showarticletitle{A lightweight knowledge graph embedding framework
  for efficient inference and storage}. In
  \bibinfo{booktitle}{\emph{Proceedings of the 30th ACM International
  Conference on Information \& Knowledge Management}}.
  \bibinfo{pages}{1909--1918}.
\newblock


\bibitem[\protect\citeauthoryear{Wang, Liu, Ma, and Sheng}{Wang
  et~al\mbox{.}}{2021c}]%
        {wang2021mulde}
\bibfield{author}{\bibinfo{person}{Kai Wang}, \bibinfo{person}{Yu Liu},
  \bibinfo{person}{Qian Ma}, {and} \bibinfo{person}{Quan~Z Sheng}.}
  \bibinfo{year}{2021}\natexlab{c}.
\newblock \showarticletitle{Mulde: Multi-teacher knowledge distillation for
  low-dimensional knowledge graph embeddings}. In
  \bibinfo{booktitle}{\emph{Proceedings of the Web Conference 2021}}.
  \bibinfo{pages}{1716--1726}.
\newblock


\bibitem[\protect\citeauthoryear{Wang, Liu, and Sheng}{Wang
  et~al\mbox{.}}{2022}]%
        {wang2022swift}
\bibfield{author}{\bibinfo{person}{Kai Wang}, \bibinfo{person}{Yu Liu}, {and}
  \bibinfo{person}{Quan~Z Sheng}.} \bibinfo{year}{2022}\natexlab{}.
\newblock \showarticletitle{Swift and Sure: Hardness-aware Contrastive Learning
  for Low-dimensional Knowledge Graph Embeddings}. In
  \bibinfo{booktitle}{\emph{Proceedings of the ACM Web Conference 2022}}.
  \bibinfo{pages}{838--849}.
\newblock


\bibitem[\protect\citeauthoryear{Wang, Mao, Wang, and Guo}{Wang
  et~al\mbox{.}}{2017}]%
        {wang2017knowledge}
\bibfield{author}{\bibinfo{person}{Quan Wang}, \bibinfo{person}{Zhendong Mao},
  \bibinfo{person}{Bin Wang}, {and} \bibinfo{person}{Li Guo}.}
  \bibinfo{year}{2017}\natexlab{}.
\newblock \showarticletitle{Knowledge graph embedding: A survey of approaches
  and applications}.
\newblock \bibinfo{journal}{\emph{IEEE Transactions on Knowledge and Data
  Engineering}} \bibinfo{volume}{29}, \bibinfo{number}{12}
  (\bibinfo{year}{2017}), \bibinfo{pages}{2724--2743}.
\newblock


\bibitem[\protect\citeauthoryear{Wang, Fan, Kuang, and Zhu}{Wang
  et~al\mbox{.}}{2021a}]%
        {wang2021explainable}
\bibfield{author}{\bibinfo{person}{Xin Wang}, \bibinfo{person}{Shuyi Fan},
  \bibinfo{person}{Kun Kuang}, {and} \bibinfo{person}{Wenwu Zhu}.}
  \bibinfo{year}{2021}\natexlab{a}.
\newblock \showarticletitle{Explainable automated graph representation learning
  with hyperparameter importance}. In \bibinfo{booktitle}{\emph{International
  Conference on Machine Learning}}. PMLR, \bibinfo{pages}{10727--10737}.
\newblock


\bibitem[\protect\citeauthoryear{Wang, Gao, Zhu, Zhang, Liu, Li, and Tang}{Wang
  et~al\mbox{.}}{2021b}]%
        {wikidata_wang2021kepler}
\bibfield{author}{\bibinfo{person}{Xiaozhi Wang}, \bibinfo{person}{Tianyu Gao},
  \bibinfo{person}{Zhaocheng Zhu}, \bibinfo{person}{Zhengyan Zhang},
  \bibinfo{person}{Zhiyuan Liu}, \bibinfo{person}{Juanzi Li}, {and}
  \bibinfo{person}{Jian Tang}.} \bibinfo{year}{2021}\natexlab{b}.
\newblock \showarticletitle{KEPLER: A unified model for knowledge embedding and
  pre-trained language representation}.
\newblock \bibinfo{journal}{\emph{Transactions of the Association for
  Computational Linguistics}}  \bibinfo{volume}{9} (\bibinfo{year}{2021}),
  \bibinfo{pages}{176--194}.
\newblock


\bibitem[\protect\citeauthoryear{Wang, Zhang, Feng, and Chen}{Wang
  et~al\mbox{.}}{2014}]%
        {wang2014knowledge}
\bibfield{author}{\bibinfo{person}{Zhen Wang}, \bibinfo{person}{Jianwen Zhang},
  \bibinfo{person}{Jianlin Feng}, {and} \bibinfo{person}{Zheng Chen}.}
  \bibinfo{year}{2014}\natexlab{}.
\newblock \showarticletitle{Knowledge graph embedding by translating on
  hyperplanes}. In \bibinfo{booktitle}{\emph{Proceedings of the AAAI Conference
  on Artificial Intelligence}}, Vol.~\bibinfo{volume}{28}.
\newblock


\bibitem[\protect\citeauthoryear{Wasserman}{Wasserman}{2018}]%
        {wasserman2018topological}
\bibfield{author}{\bibinfo{person}{Larry Wasserman}.}
  \bibinfo{year}{2018}\natexlab{}.
\newblock \showarticletitle{Topological data analysis}.
\newblock \bibinfo{journal}{\emph{Annual Review of Statistics and Its
  Application}}  \bibinfo{volume}{5} (\bibinfo{year}{2018}),
  \bibinfo{pages}{501--532}.
\newblock


\bibitem[\protect\citeauthoryear{Wu, Raghavendra, Gupta, Acun, Ardalani, Maeng,
  Chang, Aga, Huang, Bai, et~al\mbox{.}}{Wu et~al\mbox{.}}{2022}]%
        {wu2022sustainable}
\bibfield{author}{\bibinfo{person}{Carole-Jean Wu}, \bibinfo{person}{Ramya
  Raghavendra}, \bibinfo{person}{Udit Gupta}, \bibinfo{person}{Bilge Acun},
  \bibinfo{person}{Newsha Ardalani}, \bibinfo{person}{Kiwan Maeng},
  \bibinfo{person}{Gloria Chang}, \bibinfo{person}{Fiona Aga},
  \bibinfo{person}{Jinshi Huang}, \bibinfo{person}{Charles Bai},
  {et~al\mbox{.}}} \bibinfo{year}{2022}\natexlab{}.
\newblock \showarticletitle{Sustainable ai: Environmental implications,
  challenges and opportunities}.
\newblock \bibinfo{journal}{\emph{Proceedings of Machine Learning and Systems}}
   \bibinfo{volume}{4} (\bibinfo{year}{2022}), \bibinfo{pages}{795--813}.
\newblock


\bibitem[\protect\citeauthoryear{Yang, Yih, He, Gao, and Deng}{Yang
  et~al\mbox{.}}{2015}]%
        {DBLP:journals/corr/YangYHGD14a}
\bibfield{author}{\bibinfo{person}{Bishan Yang}, \bibinfo{person}{Wen{-}tau
  Yih}, \bibinfo{person}{Xiaodong He}, \bibinfo{person}{Jianfeng Gao}, {and}
  \bibinfo{person}{Li Deng}.} \bibinfo{year}{2015}\natexlab{}.
\newblock \showarticletitle{Embedding Entities and Relations for Learning and
  Inference in Knowledge Bases}. In \bibinfo{booktitle}{\emph{3rd International
  Conference on Learning Representations, {ICLR}}}.
\newblock


\bibitem[\protect\citeauthoryear{Yu, Chhetri, Canedo, Goyal, and Al~Faruque}{Yu
  et~al\mbox{.}}{2021}]%
        {yu2021pykg2vec}
\bibfield{author}{\bibinfo{person}{Shih-Yuan Yu}, \bibinfo{person}{Sujit~Rokka
  Chhetri}, \bibinfo{person}{Arquimedes Canedo}, \bibinfo{person}{Palash
  Goyal}, {and} \bibinfo{person}{Mohammad~Abdullah Al~Faruque}.}
  \bibinfo{year}{2021}\natexlab{}.
\newblock \showarticletitle{Pykg2vec: A Python Library for Knowledge Graph
  Embedding.}
\newblock \bibinfo{journal}{\emph{J. Mach. Learn. Res.}}  \bibinfo{volume}{22}
  (\bibinfo{year}{2021}), \bibinfo{pages}{16--1}.
\newblock


\bibitem[\protect\citeauthoryear{Zhang, Wang, Chen, Xu, Liu, and Zhao}{Zhang
  et~al\mbox{.}}{2021}]%
        {zhang2021meta}
\bibfield{author}{\bibinfo{person}{Yufeng Zhang}, \bibinfo{person}{Weiqing
  Wang}, \bibinfo{person}{Wei Chen}, \bibinfo{person}{Jiajie Xu},
  \bibinfo{person}{An Liu}, {and} \bibinfo{person}{Lei Zhao}.}
  \bibinfo{year}{2021}\natexlab{}.
\newblock \showarticletitle{Meta-Learning Based Hyper-Relation Feature Modeling
  for Out-of-Knowledge-Base Embedding}. In
  \bibinfo{booktitle}{\emph{Proceedings of the 30th ACM International
  Conference on Information \& Knowledge Management}}.
  \bibinfo{pages}{2637--2646}.
\newblock


\bibitem[\protect\citeauthoryear{Zhang, Zhou, Yao, and Li}{Zhang
  et~al\mbox{.}}{2022}]%
        {DBLP:journals/corr/abs-2205-02460}
\bibfield{author}{\bibinfo{person}{Yongqi Zhang}, \bibinfo{person}{Zhanke
  Zhou}, \bibinfo{person}{Quanming Yao}, {and} \bibinfo{person}{Yong Li}.}
  \bibinfo{year}{2022}\natexlab{}.
\newblock \showarticletitle{KGTuner: Efficient Hyper-parameter Search for
  Knowledge Graph Learning}.
\newblock \bibinfo{journal}{\emph{CoRR}}  \bibinfo{volume}{abs/2205.02460}
  (\bibinfo{year}{2022}).
\newblock


\bibitem[\protect\citeauthoryear{Zomorodian and Carlsson}{Zomorodian and
  Carlsson}{2005}]%
        {zomorodian2005computing}
\bibfield{author}{\bibinfo{person}{Afra Zomorodian} {and}
  \bibinfo{person}{Gunnar Carlsson}.} \bibinfo{year}{2005}\natexlab{}.
\newblock \showarticletitle{Computing persistent homology}.
\newblock \bibinfo{journal}{\emph{Discrete \& Computational Geometry}}
  \bibinfo{volume}{33}, \bibinfo{number}{2} (\bibinfo{year}{2005}),
  \bibinfo{pages}{249--274}.
\newblock


\end{thebibliography}




\section{appendix}

\begin{table*}[!htb]
\vspace{0.3cm}
\begin{minipage}{1.0\linewidth}
    \medskip
\begin{adjustbox}{width=1.0\textwidth, center}
\begin{tabular}{l|ccccc|ccccc|ccccc|ccccc}
\toprule
\multicolumn{1}{c|}{\multirow{2}{*}{\textbf{Metrics}}} & \multicolumn{5}{c|}{\textbf{FB15K}} & \multicolumn{5}{c|}{\textbf{FB15K237}}  & \multicolumn{5}{c|}{\textbf{WN18}}  & \multicolumn{5}{c}{\textbf{WN18RR}}       \\
\multicolumn{1}{c|}{} & \textbf{Hits1}($\uparrow$) & \textbf{Hits3}($\uparrow$) & \textbf{Hits10}($\uparrow$) & \textbf{MR}($\downarrow$) & \textbf{MRR}($\uparrow$) & \textbf{Hits1}($\uparrow$) & \textbf{Hits3}($\uparrow$) & \textbf{Hits10}($\uparrow$) & \textbf{MR}($\downarrow$) & \textbf{MRR}($\uparrow$) & \textbf{Hits1}($\uparrow$) & \textbf{Hits3}($\uparrow$) & \textbf{Hits10}($\uparrow$) & \textbf{MR}($\downarrow$) & \textbf{MRR}($\uparrow$) & \textbf{Hits1}($\uparrow$) & \textbf{Hits3}($\uparrow$) & \textbf{Hits10}($\uparrow$) & \textbf{MR}($\downarrow$) & \textbf{MRR}($\uparrow$) \\ \midrule
          Conicity & 0.071& -0.071& 0.000& -0.036& -0.214& 0.393& 0.250& -0.071& 0.036& 0.250& 0.600& 0.600& 0.600& -0.143& 0.600& -0.393& -0.607& -0.607& 0.357& -0.607       \\
          AVL & 0.607& 0.214& 0.250& \cellcolor{asparagus!40}{-0.679}& 0.179& -0.107& -0.321& -0.429& 0.393& -0.250& 0.886& 0.886& 0.886& -0.771& 0.886& 0.321& -0.143& -0.464& 0.607& -0.143        \\
          Graph Kernel (Train) & -0.536& -0.321& -0.357& 0.964& -0.393& -0.929& -0.714& -0.607& 0.643& -0.821& -0.943& -0.943& -0.943& 0.714& -0.943& -0.357& -0.786& -0.607& 0.571& -0.786        \\ 
          Graph Kernel (Test) & -0.107& 0.107& 0.000& 0.893& 0.036& -0.429& -0.607& -0.679& 0.464& -0.607& -0.657& -0.657& -0.657& 0.086& -0.657& -0.393& -0.714& -0.821& 0.786& -0.714   \\ \midrule
          $\mathcal{KP}$ (Train) & 0.214& 0.536& 0.750& 0.000& 0.607& 0.893& \cellcolor{asparagus!40}{0.750}& 0.643& -0.679& 0.786& 0.829& 0.829& 0.829& -0.600& 0.829& 0.286& 0.714& \cellcolor{asparagus!40}{0.643}& \cellcolor{asparagus!40}{-0.750}& 0.714        \\ 
          $\mathcal{KP}$ (Test) & \cellcolor{asparagus!40}{0.964}& \cellcolor{asparagus!40}{0.750}& \cellcolor{asparagus!40}{0.750}& -0.536& \cellcolor{asparagus!40}{0.714}& \cellcolor{asparagus!40}{0.714}& \cellcolor{asparagus!40}{0.821}& \cellcolor{asparagus!40}{0.857}& \cellcolor{asparagus!40}{-0.750}& \cellcolor{asparagus!40}{0.857}& \cellcolor{asparagus!40}{0.943}& \cellcolor{asparagus!40}{0.943}& \cellcolor{asparagus!40}{0.943}& \cellcolor{asparagus!40}{-0.829}& \cellcolor{asparagus!40}{0.943}& 0.286& \cellcolor{asparagus!40}{0.714}& 0.643& \cellcolor{asparagus!40}{-0.643}& \cellcolor{asparagus!40}{0.714}        \\ \bottomrule
\end{tabular}
\end{adjustbox}
\caption{Spearman's ranked correlation ($\rho$) scores computed from the metric scores with respect to the ranking metrics on the standard KG embedding datasets. The KG methods are evaluated after training. }
\label{tab:rho_train_test}
\end{minipage}\hfill
\vspace{-0.3cm}
\end{table*}

\begin{table*}[!htb]
\vspace{0.0cm}
\begin{minipage}{1.0\linewidth}
    \medskip
\begin{adjustbox}{width=1.0\textwidth, center}
\begin{tabular}{l|ccccc|ccccc|ccccc|ccccc}
\toprule
\multicolumn{1}{c|}{\multirow{2}{*}{\textbf{Metrics}}} & \multicolumn{5}{c|}{\textbf{FB15K}} & \multicolumn{5}{c|}{\textbf{FB15K237}}  & \multicolumn{5}{c|}{\textbf{WN18}}  & \multicolumn{5}{c}{\textbf{WN18RR}}       \\
\multicolumn{1}{c|}{} & \textbf{Hits1}($\uparrow$) & \textbf{Hits3}($\uparrow$) & \textbf{Hits10}($\uparrow$) & \textbf{MR}($\downarrow$) & \textbf{MRR}($\uparrow$) & \textbf{Hits1}($\uparrow$) & \textbf{Hits3}($\uparrow$) & \textbf{Hits10}($\uparrow$) & \textbf{MR}($\downarrow$) & \textbf{MRR}($\uparrow$) & \textbf{Hits1}($\uparrow$) & \textbf{Hits3}($\uparrow$) & \textbf{Hits10}($\uparrow$) & \textbf{MR}($\downarrow$) & \textbf{MRR}($\uparrow$) & \textbf{Hits1}($\uparrow$) & \textbf{Hits3}($\uparrow$) & \textbf{Hits10}($\uparrow$) & \textbf{MR}($\downarrow$) & \textbf{MRR}($\uparrow$) \\ \midrule
          Conicity & 0.048& -0.048& -0.048& 0.048& -0.143& 0.238& 0.143& -0.048& -0.048& 0.143& 0.467& 0.467& 0.467& -0.333& 0.467& -0.238& -0.333& -0.429& 0.238& -0.333         \\
          AVL & 0.429& 0.143& 0.143& \cellcolor{asparagus!40}{-0.524}& 0.048& -0.143& -0.238& -0.429& 0.333& -0.238& 0.733& 0.733& 0.733& -0.600& 0.733& \cellcolor{asparagus!40}{0.238}& -0.048& -0.333& 0.524& -0.048        \\
          Graph Kernel (Train) & -0.429& -0.143& -0.143& 0.905& -0.238& -0.810& -0.524& -0.333& 0.429& -0.714& -0.867& -0.867& -0.867& 0.467& -0.867& -0.333& -0.619& -0.333& 0.333& -0.619        \\ 
          Graph Kernel (Test) & -0.143& 0.143& -0.048& 0.810& 0.048& -0.429& -0.524& -0.524& 0.429& -0.524& -0.600& -0.600& -0.600& 0.200& -0.600& -0.238& -0.524& -0.619& 0.619& -0.524     \\ \midrule
          $\mathcal{KP}$ (Train) & 0.143& 0.429& 0.619& -0.048& 0.524& 0.714& 0.619& 0.429& -0.524& 0.619& 0.600& 0.600& 0.600& -0.467& 0.600& \cellcolor{asparagus!40}{0.238}& \cellcolor{asparagus!40}{0.524}& \cellcolor{asparagus!40}{0.429}& \cellcolor{asparagus!40}{-0.619}& \cellcolor{asparagus!40}{0.524}      \\ 
          $\mathcal{KP}$ (Test) & \cellcolor{asparagus!40}{0.905}& \cellcolor{asparagus!40}{0.619}& \cellcolor{asparagus!40}{0.619}& -0.429& \cellcolor{asparagus!40}{0.524}& \cellcolor{asparagus!40}{0.619}& \cellcolor{asparagus!40}{0.714}& \cellcolor{asparagus!40}{0.714}& \cellcolor{asparagus!40}{-0.619}& \cellcolor{asparagus!40}{0.714}& \cellcolor{asparagus!40}{0.867}& \cellcolor{asparagus!40}{0.867}& \cellcolor{asparagus!40}{0.867}& \cellcolor{asparagus!40}{-0.733}& \cellcolor{asparagus!40}{0.867}& \cellcolor{asparagus!40}{0.238}& \cellcolor{asparagus!40}{0.524}& \cellcolor{asparagus!40}{0.429}& -0.429& \cellcolor{asparagus!40}{0.524}       \\ \bottomrule
\end{tabular}
\end{adjustbox}
\caption{Kendall's tau ($\tau$) scores computed from the metric scores with respect to the ranking metrics on the standard KG embedding datasets. The KG methods are evaluated after training. }
\label{tab:tau_train_test}
\end{minipage}\hfill
\vspace{-0.0cm}
\end{table*}

\begin{figure}[!htb]
    \centering
    \includegraphics[width=0.35\textwidth]{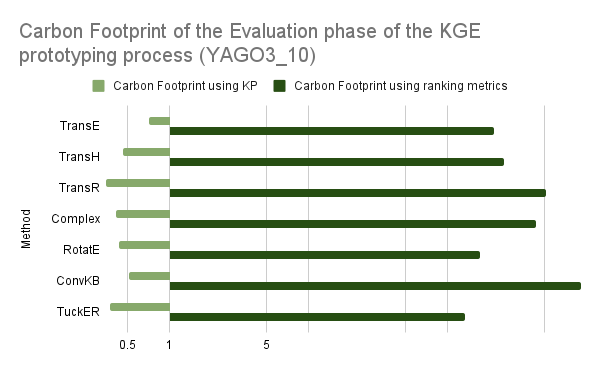}
    \caption{Study on the carbon footprint of the evaluation phase of the KGE methods on YAGO3-10 when using $\mathcal{KP}$ vs Hits@10. The x-axis shows the the carbon footprint in g eq $CO_2$ in log scale.}
    \label{fig:carbon_footprint_yago}
\end{figure}

\begin{figure}[!htb]
    \centering
    \includegraphics[width=0.35\textwidth]{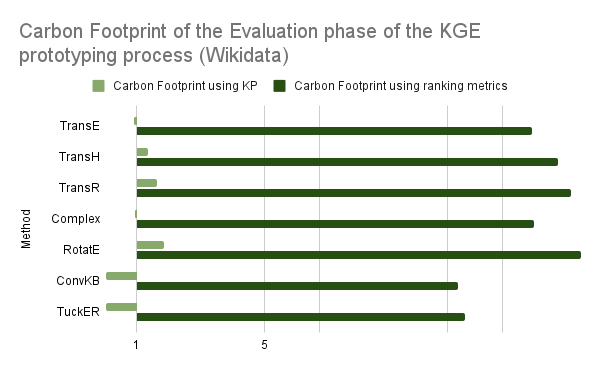}
    \caption{Study on the carbon footprint of the evaluation phase of the KGE methods on Wikidata when using $\mathcal{KP}$ vs Hits@10. The x-axis shows the the carbon footprint in g eq $CO_2$ in log scale.}
    \label{fig:carbon_footprint_wikidata}
\end{figure}

\subsection{Extended Evaluation}\label{exp_apndx}

\textbf{Effect of $\mathcal{KP}$ on Efficient KGE Methods Evaluation:}
The research community has recently proposed several KGE methods to improve training efficiency  \cite{wang2021lightweight,wang2022swift,peng2021highly}. Our idea in this experiment is to perceive if efficient KGE methods improve their overall carbon footprint using $\mathcal{KP}$. For the same, we selected state-of-the-art efficient KGE methods: Procrustes \cite{peng2021highly} and HalE \cite{wang2022swift}. Figure \ref{fig:carbon_footprintefficient} illustrates that using $\mathcal{KP}$ for evaluation drastically reduces the carbon footprints of already efficient KGE methods. For instance, the carbon footprint of HalE is reduced from 110g (using hits@10) to 20g of CO2 (using $\mathcal{KP}$). 

\begin{figure} [ht]
    \centering
    \includegraphics[width=0.35\textwidth]{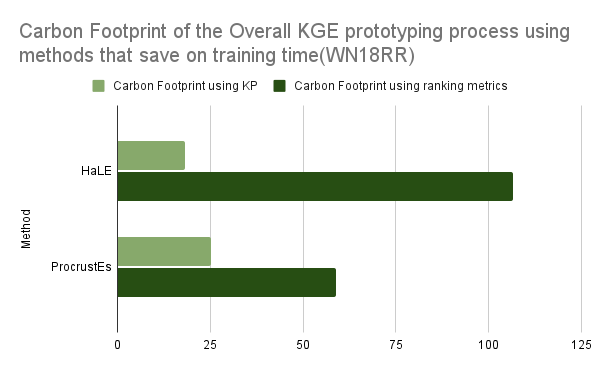}
    \caption{Study on efficient KGE methods and their the carbon footprint on WN18RR when using $\mathcal{KP}$ vs Hits@10. The x-axis shows the the carbon footprint in g eq $CO_2$.}
    \label{fig:carbon_footprintefficient}
\end{figure}

\noindent \textbf{Robustness and Efficiency on large KGs:}
 This ablation study aims to gauge the correlation behavior of $\mathcal{KP}$ and ranking metric on a large-scale KG. For the experiment, we use Yago3-10 dataset. A key reason to select the Yago-based dataset is that besides being large-scale, it has rich semantics. Results in Table \ref{tab:yago_kp_corr} illustrate $\mathcal{KP}$ shows a stable and high correlation with the ranking metric, confirming the robustness of $\mathcal{KP}$. We show carbon footprint results in Figure \ref{fig:carbon_footprint_yago} for the yago dataset. Further we also study the efficiency of $\mathcal{KP}$ on the wikidata dataset \cite{wikidata_wang2021kepler} in Figure \ref{fig:carbon_footprint_wikidata} which reaffirms that $\mathcal{KP}$ maintains its efficiency on large scale datasets.
\\

\noindent \textbf{Efficiency comparison of Sliced Wasserstein vs Wasserstein as distance metric in $\mathcal{KP}$:}
In this study we empirically provide a rationale for using sliced wasserstein as a distance metric over the wasserstein distance in $\mathcal{KP}$. The results are in table \ref{tab:time_analysis_kp_wass_sw}. We see that $\mathcal{KP}$ using sliced wasserstein distance provides a significant computational advantage over wasserstein distance, while having a good performance as seen in the previous experiments. Thus we need an efficient approximation such as the sliced wasserstein distance as the distance metric in place of wasserstein distance in $\mathcal{KP}$.

\begin{table}[ht!]
\begin{adjustbox}{width=0.5\textwidth,center}
  \begin{tabular}{l|c|c|c|c|c}
    \toprule
    \textbf{Metrics} & \textbf{Hits@1}($\uparrow$) & \textbf{Hits@3}($\uparrow$) & \textbf{Hits@10}($\uparrow$) & \textbf{MR}($\downarrow$) & \textbf{MRR}($\uparrow$) \\
    \hline
    r      &0.657& 0.594& 0.414& -0.920& 0.572 \\
    $\rho$ &0.679& 0.679& 0.5& -0.714& 0.643 \\
    $\tau$ &0.524& 0.524& 0.333& -0.524& 0.429 \\
    \bottomrule
  \end{tabular}
  \end{adjustbox}
  \caption{$\mathcal{KP}$ correlations on the YAGO dataset. }
    \label{tab:yago_kp_corr}
\end{table}

\begin{table}[ht!]
\vspace{-0.35cm}
\begin{adjustbox}{width=0.5\textwidth,center}
  \begin{tabular}{|l|c|c|c|c|c|c|}
    \toprule
    \multicolumn{1}{|l|}{\textbf{Metrics}} &
      \multicolumn{2}{c|}{\textbf{$\mathcal{KP}$(W)}} &
      \multicolumn{2}{c|}{\textbf{$\mathcal{KP}$(SW)}} &
      \multicolumn{2}{c|}{\textbf{Speedup $\uparrow$}}\\
      \hline
   \textbf{Dataset} & \textit{FB15K237} & \textit{WN18RR} & \textit{FB15K237} & \textit{WN18RR} & \textit{FB15K237} & \textit{WN18RR} \\
    \hline
    \textbf{split} & \textit{val + test} & \textit{val + test}& \textit{val + test} & \textit{val + test} & \textit{Avg} & \textit{Avg} \\
    \hline
    TransE & 1136.766 & 9.655 & \cellcolor{asparagus!40}0.321 & \cellcolor{asparagus!40}0.114 & x 3540.3 & x 84.6 \\
    TransH & 2943.869 & 7.549 & \cellcolor{asparagus!40}0.317 & \cellcolor{asparagus!40}0.095 & x 9278.5 & x 79.8 \\
    TransR & 1734.576 & 4.423 & \cellcolor{asparagus!40}0.336 & \cellcolor{asparagus!40}0.129 & x 5168.3 & x 34.4 \\
    Complex & 1054.721 & 13.089 & \cellcolor{asparagus!40}0.324 & \cellcolor{asparagus!40}0.135 & x 3255.3 & x 97.0 \\
    RotatE & 865.417 & 12.783 & \cellcolor{asparagus!40}0.342 & \cellcolor{asparagus!40}0.136 & x 2531.1 & x 94.3 \\
    TuckER & 1021.649 & 3.840 & \cellcolor{asparagus!40}0.316 & \cellcolor{asparagus!40}0.098 & x 3230.0 & x 39.1 \\
    ConvKB & 719.310 & 5.154 & \cellcolor{asparagus!40}0.429 & \cellcolor{asparagus!40}0.132 & x 1675.7 & x 39.0 \\
    \bottomrule
  \end{tabular}
  \end{adjustbox}
    \caption{Evaluation Metric Comparison wrt Computing Time (in minutes, for 100 epochs). Column 1 denotes popular KGE methods. Depicted values denote evaluation(validation+test) time for computing a metric and corresponding speedup using $\mathcal{KP} (SW)$. $\mathcal{KP} (SW)$ with sliced wasserstein as the distance metric significantly reduces the evaluation time (green) in comparison with $\mathcal{KP} (W)$ which uses the wasserstein distance.}
    \label{tab:time_analysis_kp_wass_sw}
    \vspace{-0.3cm}
\end{table}

\subsection{Theoretical Proof Sketches}\label{theory_apndx}
We work under the following considerations:
As the KGE method converges the mean statistic($m_{\nu}$) of the scores of the positive triples consistently lies on one side of the half plane formed by the mean statistic($m_{\mu}$) of the negative triples, irrespective of the data distribution. The detail proofs are \href{https://github.com/ansonb/Knowledge_Persistence/blob/main/WWW2023_KP_unlimited_appendix.pdf}{\textbf{here}}.
\begin{lemma}\label{lm_corr_kp_perm_apndx}
    $\mathcal{KP}$ has a monotone increasing correspondence with the Proxy of the Expected Ranking Metrics(PERM) under the above stated considerations as $m_{\nu}$ deviates from $m_{\mu}$
\end{lemma}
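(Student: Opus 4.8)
The plan is to parametrize both quantities by the single scalar separation $\delta = \|m_\nu - m_\mu\|$ between the score means, show that $PERM$ and $\mathcal{KP}$ are each strictly monotone in $\delta$, and then eliminate $\delta$ to obtain the claimed one-to-one correspondence (with $\mathcal{KP}$ increasing in the deviation, as the statement asserts).

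First I would compute $PERM$ in closed form under the normalized (Gaussian) assumption. Writing $X^{+}\sim D^{+}$ and $X^{-}\sim D^{-}$, the two definitions telescope into $PERM=\int D^{+}(x)\,ER(x)\,dx=\Pr[X^{-}>X^{+}]$, which for Gaussians equals $\Phi\!\big((m_\mu-m_\nu)/\sqrt{\sigma_\mu^{2}+\sigma_\nu^{2}}\big)$. Since $\Phi$ is strictly monotone and, by the half-plane consideration, the sign of $m_\nu-m_\mu$ is fixed as the method converges, $PERM$ is a strictly monotone function of $\delta$; this also re-derives the informal monotone link between $PERM$ and the ranking metrics stated just before the lemma.

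Next I would treat $\mathcal{KP}=SW(D^{+},D^{-})$. Projecting both diagrams onto a direction $\theta\in S^{d-1}$ turns each slice into a pair of one-dimensional Gaussians whose squared $2$-Wasserstein distance is $(\theta^{\top}(m_\nu-m_\mu))^{2}+(\sigma_{\nu,\theta}-\sigma_{\mu,\theta})^{2}$; integrating over $\theta$ yields $SW_2^{2}=c_d\,\delta^{2}+K$, where the covariance contribution $K$ does not depend on $\delta$. With the covariances held fixed, $\mathcal{KP}$ is therefore strictly increasing in $\delta$, i.e. $\mathcal{KP}$ grows monotonically as $m_\nu$ deviates from $m_\mu$. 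The half-plane consideration is exactly what prevents the signed mean gap from folding back through zero, so the map $\delta\mapsto\mathcal{KP}$ stays injective on the relevant range.

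Finally, because $\delta\mapsto PERM$ and $\delta\mapsto\mathcal{KP}$ are both strictly monotone, composing one with the inverse of the other expresses $PERM$ as a strictly monotone function of $\mathcal{KP}$, giving the one-one correspondence. I expect the crux to be the middle step: justifying that the Sliced Wasserstein distance between the $0$-dimensional \emph{persistence diagrams} — rather than between the raw score distributions — inherits monotonicity in $\delta$. This rests on the abstraction identifying the diagram's death-coordinate point cloud with the (filtered) score distribution, and on checking that the fixed variance terms $K$ together with the projection average over $S^{d-1}$ do not destroy the monotone dependence on the mean separation; the covariance-dependent corrections here are precisely the quantities that reappear in the stability bound of Theorem \ref{thm_stability_kp_ranking_main}.
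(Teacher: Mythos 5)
Your proof is correct, and it follows the same overall skeleton as the paper's argument --- parametrize the two quantities by the positive/negative mean separation, show each is monotone in that separation, then compose to get the correspondence --- but the technical route for the two monotonicity claims is genuinely different. For PERM, the paper never computes a closed form: it differentiates under the integral sign and checks signs, $\frac{\partial P}{\partial m_{\nu}} \geq 0$, $\frac{\partial P}{\partial \Sigma_{\nu}} \leq 0$, and so on. Your reduction $PERM = \Pr[X^{-} > X^{+}] = \Phi\big((m_{\mu}-m_{\nu})/\sqrt{\sigma_{\mu}^{2}+\sigma_{\nu}^{2}}\big)$ is more elementary and sharper, and it makes injectivity immediate rather than inferred from a sign condition. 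For $\mathcal{KP}$, the paper invokes the Gaussian $W_2$ closed form $W_2^{2}(\mu,\nu)=\norm{\mu-\nu}^{2}+B(\Sigma_{\mu},\Sigma_{\nu})^{2}$ and computes matrix gradients of the Bures term, whereas you compute the actual sliced distance per projection and integrate, obtaining $SW_2^{2}=c_d\,\delta^{2}+K$; this is arguably more faithful to what $\mathcal{KP}$ is defined to be (the paper's proof silently substitutes $W_2$ for $SW$). The trade-off runs the other way on generality: the paper's directional-derivative argument establishes co-monotonicity along directions $(dm_{\nu},d\sigma,d\sigma)$ in which both variances drift simultaneously, while your argument freezes the covariances so that $K$ is constant --- a restriction, though one consistent with the lemma's phrasing ``as $m_{\nu}$ deviates from $m_{\mu}$.'' Two shared caveats, neither fatal: (i) both proofs rest on the unproved identification of the $0$-dimensional persistence diagrams with the Gaussian score distributions, which you at least flag explicitly as the crux; (ii) the ``increasing'' direction of the correspondence depends on which side of the half-plane $m_{\nu}$ lies --- your $\Phi$ formula makes visible that PERM can decrease while $\mathcal{KP}$ increases when positives score above negatives, a sign subtlety the paper absorbs (as you do) into the half-plane consideration.
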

\begin{proof}[Proof Sketch]
    Considering the 0-dimensional PD as used by $\kp$ and a normal distribution for the edge weights (can be extended to other distributions using techniques like \cite{sakia1992box}) of the graph(scores of the triples), we have univariate gaussian measures \cite{sharma2018towards} $\mu$ and $\nu$ for the positive and negative distributions respectively. Denote by $m_{\mu}$ and $m_{\nu}$ the means of the distributions $\mu$ and $\nu$ respectively and by $\Sigma_{\mu}$, $\Sigma_{\nu}$ the respective covariance matrices. 
    \begin{equation}
        W_2^2(\mu,\nu) = \norm{\mu-\nu}^2 + B(\Sigma_{\mu},\Sigma_{\nu})^2
    \end{equation}
    where $B(\Sigma_{\mu},\Sigma_{\nu})^2 = tr(\Sigma_{\mu} + \Sigma_{\nu} - 2(\Sigma_{\mu}^{\frac{1}{2}}\Sigma_{\nu}\Sigma_{\mu}^{\frac{1}{2}})^{\frac{1}{2}})$. 

    Next we see how that changing the means of the distribution(and also variance) changes PERM and $\kp$. We can show that,
    \begin{align*}
        P &= \int_{x=-\infty}^{x=\infty} D^{+}(x) \left(\int_{y=x}^{y=\infty} D^{-}(x) dy\right) dx \\
        \frac{\partial P}{\partial m_{\nu}} &= \int_{x=-\infty}^{x=\infty} D^{+}(x) \left(\int_{y=x}^{y=\infty} \frac{\partial D^{-}(x)}{\partial m_{\nu}} dy\right) dx \\
        &\geq 0 \\
        \frac{\partial P}{\partial \Sigma_{\nu}} &= \int_{x=-\infty}^{x=\infty} D^{+}(x) \left(\int_{y=x}^{y=\infty} \frac{\partial D^{-}(y)}{\partial \Sigma_{\nu}} dy \right) dx \\
        &\leq 0\\
        \frac{\partial P}{\partial \Sigma_{\mu}} &= \int_{x=-\infty}^{x=\infty} \frac{\partial D^{+}(x)}{\partial \Sigma_{\nu}} \left(\int_{y=x}^{y=\infty} D^{-}(y) dy\right) dx \\
    \end{align*}
    Since $\kp$ is the (sliced) wasserstein distance between PDs we can show the respective gradients are as below,
     \begin{align*}
        \frac{\partial W_2^{2}(\mu,\nu)}{\partial m_{\nu}} &= 2 |m_{\mu}-m_{\nu}| \\
        &\geq 0 \\
        \frac{\partial W_2^{2}(\mu,\nu)}{\partial \Sigma_{\nu}} &= I - \Sigma_{\mu}^{\frac{1}{2}} (\Sigma_{\mu}^{\frac{1}{2}} \Sigma_{\nu} \Sigma_{\mu}^{\frac{1}{2}})^{\frac{-1}{2}} \Sigma_{\mu}^{\frac{1}{2}} \\
    \end{align*}
    As the generating process of the scores changes the gradient of PERM along the direction $(dm_{\nu}, d\sigma_{\mu}, d\sigma_{\nu})$ can be shown to be the following
    \begin{align*}
        &\left< \left(dm_{\nu}, d\sigma, d\sigma\right), \left(\frac{\partial PERM}{\partial m_{\nu}}, \frac{\partial PERM}{\partial \Sigma_{\mu}}, \frac{\partial PERM}{\partial \Sigma_{\nu}}\right) \right> \\
        &\geq 0
    \end{align*}

 Similarly the gradient of $\kp$ along the direction $(dm_{\mu}, d\sigma_{\mu}, d\sigma_{\nu})$ is
    \begin{align*}
        &\left< (dm_{\nu}, d\sigma, d\sigma), (\frac{\partial W_2^{2}(\mu,\nu)}{\partial m_{\nu}}, \frac{\partial W_2^{2}(\mu,\nu)}{\partial \Sigma_{\mu}}, \frac{\partial W_2^{2}(\mu,\nu)}{\partial \Sigma_{\nu}}) \right> \\
        &\geq 0 \\
    \end{align*}

    Since both PERM and and $\kp$ vary in the same manner as the distribution changes, the two have a one-one correspondence \cite{spearman_10.2307/1412408}. 
\end{proof}

The above lemma shows that there is a one-one correspondence between $\kp$ and PERM and by definition PERM has a one-one correspondence with the ranking metrics. Therefore, the next theorem follows as a natural consequence
\begin{theorem}\label{thm_corr_kp_ranking_metrics_apndx}
    $\mathcal{KP}$ has a one-one correspondence with the Ranking Metrics under the above stated considerations
\end{theorem}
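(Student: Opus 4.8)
The plan is to obtain the theorem as a pure transitivity statement, chaining two monotone one-one correspondences that are already in hand. Lemma \ref{lm_corr_kp_perm_apndx} supplies the first link: under the stated convergence consideration, $\mathcal{KP}$ moves co-monotonically with PERM as the positive and negative score distributions separate (as $m_{\nu}$ deviates from $m_{\mu}$, together with the accompanying covariance changes). The definition of PERM supplies the second link: by its ``area under the negative tail'' construction, PERM increases precisely when more negative mass is pushed above a given positive score, which is exactly the event that worsens the rank of a positive triple, as argued informally just before Lemma \ref{lm_corr_kp_perm_main}. So the task reduces to making the second correspondence explicit and composing it with the first.

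First I would fix a one-parameter family of distribution configurations indexed by $t$, where increasing $t$ moves along the direction $(dm_{\nu}, d\sigma, d\sigma)$ of increasing separation used in the proof of Lemma \ref{lm_corr_kp_perm_apndx}, and record that all three quantities of interest are functions of $t$ along this path: the ranking metric $h(t)$, PERM $=f(t)$, and $\mathcal{KP}=g(t)$. The interpretation recalled above shows $f$ and $h$ are monotone increasing in $t$, while the directional-derivative sign computations in the lemma, namely $\langle (dm_{\nu}, d\sigma, d\sigma), (\partial \mathrm{PERM}/\partial m_{\nu}, \partial \mathrm{PERM}/\partial \Sigma_{\mu}, \partial \mathrm{PERM}/\partial \Sigma_{\nu}) \rangle \geq 0$ together with its $W_2^2$ analogue, show that $f$ and $g$ are co-monotone in $t$.

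Next I would compose. Since $f$ is monotone in $t$ it is invertible on the relevant range, so $\mathcal{KP} = g(f^{-1}(\mathrm{PERM}))$ exhibits $\mathcal{KP}$ as a monotone increasing function of PERM; composing once more with the monotone relation between PERM and $h$ gives $\mathcal{KP}$ as a monotone increasing function of the ranking metric, i.e.\ the claimed one-one correspondence. The appeal to \cite{spearman_10.2307/1412408} at the end of the lemma's proof is exactly what certifies that co-monotonicity along a common parameter yields a genuine one-one (rank) correspondence, so the same citation closes this step.

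The hard part will be upgrading ``monotone non-decreasing'' to a bona fide one-one correspondence: the gradient computations only deliver $\geq 0$, so I must argue that the derivatives do not vanish on a set large enough to collapse the map. This is precisely where the half-plane consideration on $m_{\nu}$ versus $m_{\mu}$ is essential, since it rules out the degenerate regime in which positives and negatives are indistinguishable and both $f'$ and $g'$ stall simultaneously; away from that regime the separation $t$ is a faithful index, so $f^{-1}$ is single-valued and distinct ranking outcomes map to distinct $\mathcal{KP}$ values. I expect this faithfulness argument to be the only non-routine step, and it is what forces the ``with high probability'' qualifier in the corresponding main-text statement, reflecting the measure-zero configurations where the assumed one-sidedness fails.
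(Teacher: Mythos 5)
Your proposal takes essentially the same route as the paper: the paper obtains this theorem purely by transitivity, composing the $\mathcal{KP}$--PERM correspondence of Lemma \ref{lm_corr_kp_perm_apndx} with the (definitional) monotone correspondence between PERM and the ranking metrics, which is exactly your chain. Your one-parameter family, the inversion $g(f^{-1}(\cdot))$, and the concern about upgrading non-strict to strict monotonicity merely make explicit what the paper compresses into ``the next theorem follows as a natural consequence.''
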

    

\begin{theorem}\label{thm_stability_kp_ranking_apndx}
    Under the considerations of theorem \ref{thm_corr_kp_ranking_metrics_apndx}, the relative change in $\mathcal{KP}$ on addition of random noise to the scores is bounded by a function of the original and noise-induced covariance matrix as
    $\frac{\Delta \mathcal{KP}}{\mathcal{KP}} \leq max( (1 - |\Sigma_{\mu_1}^{+1} \Sigma_{\mu_2}^{-1}|^{\frac{3}{2}}), (1 - |\Sigma_{\nu_1}^{+1} \Sigma_{\nu_2}^{-1}|^{\frac{3}{2}}) )$,
    where $\Sigma_{\mu_1}$ and $\Sigma_{\nu_1}$ are the covariance matrices of the positive and negative triples' scores respectively and $\Sigma_{\mu_2}$ and $\Sigma_{\nu_2}$ are that of the corrupted scores.
\end{theorem}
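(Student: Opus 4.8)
The plan is to exploit two facts already in hand: that $\mathcal{KP}$ is a (sliced) Wasserstein distance between the two persistence diagrams and is therefore itself a metric, and that under the normality assumption the relevant distributional quantities admit the closed form $W_2^2(\mu,\nu) = \norm{m_\mu - m_\nu}^2 + B(\Sigma_\mu,\Sigma_\nu)^2$ from Lemma \ref{lm_corr_kp_perm_apndx}. First I would write $\mathcal{KP}_1 = SW(D_1^+, D_1^-)$ for the clean scores and $\mathcal{KP}_2 = SW(D_2^+, D_2^-)$ for the noise-corrupted scores, and use the triangle inequality for the metric $SW$ to bound the absolute change $|\mathcal{KP}_2 - \mathcal{KP}_1|$ by $SW(D_1^+, D_2^+) + SW(D_1^-, D_2^-)$. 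This reduces the problem to controlling how far each persistence diagram moves when noise is injected into the edge weights.

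Second, I would invoke the stability of persistent homology: because the $0$-dimensional diagram is generated by a filtration whose values are exactly the triple scores, a perturbation of those scores displaces the diagram by a controlled amount via the Wasserstein stability theorem. The crucial step is to translate this score-level perturbation into the covariance language of the statement. Modelling the noisy scores as the pushforward of the clean Gaussian under the linear map $T_\mu = \Sigma_{\mu_2}^{1/2}\Sigma_{\mu_1}^{-1/2}$ (and analogously $T_\nu$ on the negative side), the diagram $D_2^+$ becomes the pushforward of $D_1^+$ under the induced rescaling of the birth--death plane. The displacement $SW(D_1^+, D_2^+)$ is then governed by how far $T_\mu$ deviates from the identity, which for centered Gaussians is summarised by the determinant ratio $|\Sigma_{\mu_1}|/|\Sigma_{\mu_2}| = |\Sigma_{\mu_1}^{+1}\Sigma_{\mu_2}^{-1}|$.

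Third, I would assemble the relative bound. Dividing through by $\mathcal{KP}_1$ and expressing each diagram displacement as a fraction of the baseline distance, the positive-side contribution collapses to $1 - |\Sigma_{\mu_1}^{+1}\Sigma_{\mu_2}^{-1}|^{3/2}$ and the negative-side contribution to $1 - |\Sigma_{\nu_1}^{+1}\Sigma_{\nu_2}^{-1}|^{3/2}$; the exponent $3/2$ emerges from combining the Gaussian density normalization $|\Sigma|^{-1/2}$ with the two-dimensional geometry of the persistence diagram, where both birth and death coordinates rescale under $T$. Because the distance between two rescaled diagrams changes, in relative terms, by no more than the larger of the two individual rescaling deviations, I would conclude the bound with the $\max$ of the two expressions rather than their sum, recovering the claimed inequality.

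The main obstacle is the middle step: rigorously passing from the combinatorial displacement of the persistence diagram to the clean determinant-ratio form, and in particular pinning down the exponent $3/2$. This requires the asymptotic correspondence between the empirical $0$-dimensional diagram and the underlying score distribution, so that a covariance change of the scores translates faithfully into a rescaling of the diagram measure, together with careful tracking of how the sliced projections of a two-dimensional diagram respond to the linear map $T$. The remaining steps---the triangle inequality, the stability invocation, and the final normalization---are comparatively routine once this correspondence is established.
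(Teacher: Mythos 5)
Your route is genuinely different from the paper's, and it has two concrete gaps that would prevent it from closing. The paper's proof never touches diagram-level stability: it works directly with the score distributions $\mu_1,\nu_1$ (clean) and $\mu_2,\nu_2$ (corrupted), applies the Kantorovich duality to the \emph{difference} $\mathcal{KP}_1 - \mathcal{KP}_2$ — using the same dual potentials $(\Phi,\Psi)$ for both distances, so that the bound becomes $\sup_{\Phi,\Psi}\int \Phi\,(d\mu_1-d\mu_2) + \int \Psi\,(d\nu_1-d\nu_2)$ — and then manipulates the Gaussian densities (via $d\mu_1(x) = \det(\mathrm{diag}(-\mu_1\Sigma_{\mu_1}^{-1}(x-m_{\mu_1})))\,dx$) to extract the determinant ratio $\det(\Sigma_{\mu_1}\Sigma_{\mu_2}^{-1})$ with exponent $\frac{n}{2}+1$, which becomes $\frac{3}{2}$ in the univariate case $n=1$. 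Your first gap is the sum-to-max step: the triangle inequality gives you $|\mathcal{KP}_2-\mathcal{KP}_1| \leq SW(D_1^+,D_2^+) + SW(D_1^-,D_2^-)$, a \emph{sum}; bounding each term by the corresponding parenthesized expression yields at best twice the claimed max, and your sentence asserting that the relative change "is no more than the larger of the two individual rescaling deviations" is exactly the statement that needs proof, not a justification of it. In the paper's duality formulation the two measure-difference terms are controlled jointly against $\mathcal{KP}_1$, which is how the max (rather than a sum) enters.

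Your second gap is the middle step you yourself flag as the main obstacle, and the tools you name will not supply it. Bottleneck/Wasserstein stability theorems for persistent homology bound diagram displacement by the $\sup$-norm (or a $p$-norm) of the perturbation of the filtration function; the noise here is Gaussian, hence unbounded, so those theorems do not apply as stated. Moreover, the corrupted diagram $D_2^+$ is \emph{not} the pushforward of $D_1^+$ under the linear map $T_\mu = \Sigma_{\mu_2}^{1/2}\Sigma_{\mu_1}^{-1/2}$: zero-dimensional persistence pairs are determined by the merge structure (essentially the minimum spanning forest of the weighted graph), which changes combinatorially when edge weights are independently perturbed — only the marginal distribution of scores rescales, not the diagram as a measure on the birth--death plane. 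The paper sidesteps this entirely by identifying the diagrams with the Gaussian score distributions (the same identification used in Lemma \ref{lm_corr_kp_perm_apndx}), so the covariance ratio falls out of Gaussian calculus under the duality bound rather than from any rescaling of diagrams; relatedly, your proposed derivation of the exponent $\frac{3}{2}$ (density normalization $|\Sigma|^{-1/2}$ combined with two-dimensional diagram geometry) does not match the paper's $\frac{n}{2}+1$ with $n=1$ and is not derived. To repair your argument you would need a distributional (not worst-case) stability statement linking random filtration noise to diagram movement, which is a substantial missing ingredient.
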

\begin{proof}[Proof Sketch]
    Consider a zero mean random noise to simulate the process of varying the distribution of the scores of the KGE method. Let $m_{\mu_1}$ and $m_{\nu_1}$ be the means of the positive and negative triples' scores of the original method and $\Sigma_{\mu_1}$, $\Sigma_{\nu_1}$ be the respective covariance matrices. Let  $m_{\mu_2}$ and $m_{\nu_2}$ be the means of the positive and negative triples' scores of the corrupted method and $\Sigma_{\mu_2}$, $\Sigma_{\nu_2}$ be the respective covariance matrices. 
    Considering the kantorovich duality \cite{villani2009optimal} and taking the difference between the two measures we have
    \begin{align*}
        &\kp_1 - \kp_2 \\
        &= \underset{\gamma_1 \in \Pi(x,y)}{inf} \int_{\gamma_1} Distance(x,y) d\gamma_1(x,y) \\
        &- \underset{\gamma_2 \in \Pi(x,y)}{inf} \int_{\gamma_2} Distance(x,y) d\gamma_2(x,y)\\ 
        &\leq \underset{\Phi,\Psi}{sup} \int_{x} \Phi(x) d\mu_1(x) + \int_{y} \Psi(y) d\nu_1(y) \\
        &- \int_{x} \Phi(x) d\mu_2(x) - \int_{y} \Psi(y) d\nu_2(y) \\
        &\leq \underset{\Phi,\Psi}{sup} \int_{x} \Phi(x) (d\mu_1(x) - d\mu_2(x)) + \int_{y} \Psi(y) (d\nu_1(y) - d\nu_2(y)) \\
    \end{align*}
    Now by definition of the measure $\mu_1$ we have
    \begin{align*}
        \frac{\partial \mu_1}{\partial x} &= -\mu_1 \Sigma_{\mu_1}^{-1} (x-m_{\mu_1}) \\
        d\mu_1(x_i) &= -(\mu_1 \Sigma_{\mu_1}^{-1} (x-m_{\mu_1}))[i] dx_i \\
        \therefore d\mu_1(x) &= det( diag(-\mu_1 \Sigma_{\mu_1}^{-1} (x-m_{\mu_1}))) dx \\
    \end{align*}
    From the above results we can show the following
    \begin{align*}
        &\kp_1 - \kp_2 \\
        &\leq max((1 - det(\Sigma_{\mu_1} \Sigma_{\mu_2}^{-1})^{\frac{n}{2}+1} ),(1 - det(\Sigma_{\nu_1} \Sigma_{\nu_2}^{-1})^{\frac{n}{2}+1} )) \kp_1 \\
        &\therefore \frac{\Delta \kp}{\kp} \leq  max\left(\left(1 - det(\Sigma_{\mu_1} \Sigma_{\mu_2}^{-1})^{\frac{n}{2}+1} \right),\left(1 - det(\Sigma_{\nu_1} \Sigma_{\nu_2}^{-1})^{\frac{n}{2}+1} \right)\right)
    \end{align*}
    In our case as we work in the univariate setting $n=1$ and thus we have $ \frac{\Delta \kp}{\kp} \leq  max\left(\left(1 - det(\Sigma_{\mu_1} \Sigma_{\mu_2}^{-1})^{\frac{3}{2}} \right),\left(1 - det(\Sigma_{\nu_1} \Sigma_{\nu_2}^{-1})^{\frac{3}{2}} \right)\right)$, as required.

\end{proof}
Theorem \ref{thm_stability_kp_ranking_apndx} shows that as noise is induced gradually, the $\mathcal{KP}$ value changes in a bounded manner as desired.

\end{document}